\documentclass[11pt]{article}

\usepackage{xcolor}
\definecolor{brightmaroon}{rgb}{0.76, 0.13, 0.28}

\usepackage[
  shownumpages,               
  bgcolor={245,245,250},      
  braincolor={brightmaroon},    
  citingstyle=authoryear,     
  bibliostyle=plainnat,       
  bibfile=refs          
]{brainlab}

\usepackage{url}            
\usepackage{booktabs}       
\usepackage{amsfonts}       
\usepackage{nicefrac}       
\usepackage{microtype}      
\usepackage{xcolor}         

\usepackage{amsmath}
\usepackage{amssymb}
\usepackage{mathtools}
\usepackage{amsthm}
\usepackage{wrapfig}
\usepackage{nicefrac}
\allowdisplaybreaks

\usepackage{pifont}

\usepackage{algorithm}
\usepackage{algpseudocode}
\usepackage{multirow}

\usepackage{graphicx}
\usepackage{subfigure}
\usepackage{multirow}
\usepackage{colortbl}
\definecolor{bgcolor}{rgb}{0.8,1,1}
\definecolor{bgcolor2}{rgb}{0.8,1,0.8}
\usepackage{threeparttable}

\usepackage{tcolorbox}
\usepackage{pifont}
\definecolor{mydarkgreen}{RGB}{39,130,67}
\definecolor{mydarkred}{RGB}{192,47,25}

\usepackage[font=small]{caption}
\usepackage{subcaption}
\usepackage{enumitem}
\usepackage{wrapfig}

\renewcommand{\l}{\left}
\renewcommand{\r}{\right}
\newcommand{\E}{\mathbb{E}}

\usepackage{graphicx}
\usepackage{subfigure}
\usepackage{multirow}
\usepackage{colortbl}
\definecolor{bgcolor}{rgb}{0.8,1,1}
\definecolor{bgcolor2}{rgb}{0.8,1,0.8}
\usepackage{graphicx}
\usepackage{subcaption}
\usepackage{float} 

\usepackage[textsize=tiny]{todonotes}

\setbrainmeta{
  title={Zero-Order Optimization for LLM Fine-Tuning via Learnable Direction Sampling},
  authors={
      Valery Parfenov\textsuperscript{$\dagger$ 1,2},
      Grigoriy Evseev\textsuperscript{$\dagger$ 1},
      Andrey Veprikov\textsuperscript{1,3},
      Nikolay Bushkov\textsuperscript{4}, 
      Stanislav Moiseev\textsuperscript{4},
      Aleksandr Beznosikov\textsuperscript{1,3,4}
  },
  affiliations={
    \textsuperscript{$\dagger$}Equal contribution \\
    \textsuperscript{1}Basic Research of Artificial Intelligence Laboratory (BRAIn Lab) \\
    \textsuperscript{2}HSE University\\
    \textsuperscript{3}Federated Learning Problems Laboratory \\
    \textsuperscript{4}T-Technologies \\
    \textsuperscript{5}Innopolis University\\
  },
  abstract={
    Fine-tuning large pretrained language models (LLMs) is a cornerstone of modern NLP, yet its growing memory demands (driven by backpropagation and large optimizer States) limit deployment in resource-constrained settings. Zero-order (ZO) methods bypass backpropagation by estimating directional derivatives from forward evaluations, offering substantial memory savings. However, classical ZO estimators suffer from high variance and an adverse dependence on the parameter dimensionality $d$, which has constrained their use to low-dimensional problems. In this work, we propose a policy-driven ZO framework that treats the sampling distribution over perturbation directions as a learnable policy and updates it to reduce the variance of directional estimates. We develop a practical algorithm implementing this idea and provide a theoretical analysis, showing that learned sampling distributions improve the quality of gradient information and relax the explicit dependence on $d$ in convergence bounds. Empirically, we validate the approach on challenging LLM fine-tuning benchmarks, demonstrating substantially improved performance compared to standard ZO baselines. Our results suggest that adaptive direction sampling is a promising route to make ZO fine-tuning viable at scale. The source code is available at \url{https://github.com/brain-lab-research/zo_ldsd}.
  },
}

\begin{document}
\begin{mainpart}
\section{Introduction}

Fine-tuning pre-trained Large Language Models (LLMs) has become the dominant paradigm in contemporary natural language processing \cite{howard2018universal}, allowing large pretrained networks to be efficiently adapted to diverse downstream tasks with limited labelled data \citep{howard2018universal, zhang2019dialogpt, lester2021power}. The fine-tuning objective is commonly cast as an unconstrained optimization problem
\begin{equation}
\label{eq:main_problem}
\min_{x \in \mathbb{R}^d} f(x),
\end{equation}
where $x$ denotes the model parameters, $d$ is the dimension of $x$ (number of trainable parameters) and $f(x)$ is the loss.

Algorithms that rely on access to gradients remain the prevailing tools for solving the problem \eqref{eq:main_problem}. Classical first-order optimizers such as Stochastic Gradient Descent \cite{robbins1951stochastic,amari1993backpropagation} and Adam \cite{kingma2014adam} are widely used because they integrate naturally with backpropagation and offer strong empirical performance \cite{semenov2025benchmarking,wen2025fantastic}.

Despite these advances in stability and the practical convenience afforded by backpropagation, its memory footprint poses a growing practical barrier as model sizes increase \cite{rajbhandari2020zero}. Storing activations and optimizer States for gradient computation can dominate the memory budget: reported measurements indicate that gradient computation for models at the tens-of-billions parameter scale may require approximately $12$ times more memory than inference \citep{zhu2023efficient}. This constraint restricts accessible fine-tuning workflows on resource-limited hardware (for example, edge devices and consumer GPUs) and complicates large-scale distributed training \citep{zhu2023pockengine, gao2024enabling, liao2024lohan}.

To mitigate memory pressure, a range of strategies has been explored. One family comprises coordinate-wise or block-coordinate schemes which can reduce the number of gradient parameters from $d$ to $1$ \cite{richtarik2014iteration}. However, this comes at the cost of an asymptotic slowdown in convergence by a factor of $d$ \cite{li2018faster} (see Section \ref{sec:algo} for details). One of the most prominent representatives of coordinate methods is BAdam \cite{luo2024badam}, which reduces memory consumption by performing block-wise adaptive updates. While this lowers the storage requirements for optimizer States, as is typical for coordinate-wise optimization methods, it introduces a trade-off in the form of slower convergence, with the iteration complexity increasing proportionally to the number of blocks and, in the worst case, to the dimension $d$.

Motivated by these memory–convergence trade-offs in first-order schemes, Zero-order (ZO) methods offer a principled alternative. Rather than computing exact gradients via backpropagation, ZO approaches reconstruct directional derivatives (or gradient projections) from function evaluations by perturbing the objective along randomized directions. As a consequence, ZO methods require substantially less memory than coordinate-wise first-order methods such as BAdam \cite{malladi2023fine}, since they avoid storing intermediate activations and optimizer States associated with gradient computation. By estimating descent directions using only such finite-difference information, ZO methods eliminate the need for backward passes and the associated storage of intermediate activations. This black-box viewpoint decouples optimization from the internal implementation of the model and enables fine-tuning with substantially smaller memory budgets. Recent work has demonstrated that applying classical ZO updates to LLM fine-tuning can reduce memory consumption by several times compared to standard first-order pipelines \citep{zo_bench}. 
The first recognition of this idea in the context of LLM fine-tuning came with the \texttt{MeZO} \citep{malladi2023fine} method, where the authors employed a \texttt{ZO SGD} \citep{ghadimi2013stochastic} update based on the following gradient estimator:
\begin{eqnarray}\label{eq:zo_oracle}
    v \cdot \langle \nabla f(x), v \rangle \approx v \cdot \frac{f(x+\tau v)-f(x-\tau v)}{2\tau},
\end{eqnarray}
where random direction $v \in \mathbb{R}^d$ typically sampled from $\mathcal{N}(0,1)$ \cite{nesterov2017random}, and $\tau > 0$ is the smoothing parameter. 

At a high level, ZO-SGD as well as other ZO methods share a similar design philosophy with coordinate approaches, however, rely exclusively on forward-only information to approximate gradients. Estimators of the form \eqref{eq:zo_oracle} constitute a standard design choice for ZO schemes and are widely used in the modern literature \cite{chen2017zoo,malladi2023fine,petrov2025leveraging}. Moreover, in the limit $\tau\rightarrow 0$ the method recovers coordinate SGD.

Therefore ZO approaches suffer from inherent limitation: gradient estimates based on random directional derivatives exhibit high variance, similar to coordinate methods. Moreover, as the dimensionality $d$ increases, the probability that a random direction forms a large angle with the true gradient grows, leading to a further deterioration in the accuracy of the estimated descent direction. In essence, ZO methods obtain information only along a limited set of directions, behaving analogously to coordinate- or block-wise schemes. Consequently, they inherit a principal drawback: the effective convergence rate degrades by a factor of $d$ \cite{zhan2024unlocking}. 
Consequently, until recently, the use of ZO methods were mostly restricted to low-dimensional settings \cite{pmlr-v84-wang18e}. However, in light of modern applications in LLM fine-tuning, where the dimensionality $d$ reaches $10^{10}$ \cite{seung2025low}, this limitation has become even more pronounced. 


This work aims to mitigate the restrictive role of variance in ZO schemes. We view the ZO optimization process through the lens of Reinforcement Learning (RL), interpreting the distribution mean of the direction $v$ as a policy. By introducing a plug-and-play extension to the ZO framework, we enable a memory-efficient exploitation of information from previous iterations to improve the quality of gradient approximation. Our goal is both to provide theoretical insight and to empirically validate the proposed approach in real-world application settings.

Our contributions are as follow:
\begin{itemize}
    \item We introduce a novel ZO method, based on optimization of distribution to avoid uninformative directions.

    \item We provide theoretical guarantees for the proposed method under the smooth non-convex setup. Novel design leads to non-standard challenges, theoretical analysis and bounds. The resulting convergence guarantees exhibit no dependence on the dimension $d$.

    \item We empirically evaluate the proposed ZO method on challenging LLM fine-tuning benchmarks, demonstrating their effectiveness and practical relevance.
\end{itemize}

\section{Related Work}

\subsection{Memory efficient approaches}

We note that, beyond ZO approaches, the memory-efficient optimization literature includes several orthogonal techniques, such as gradient quantization to lower precision \citep{novikov2023few}, compression of optimizer statistics \citep{dettmers20218, li2023memory}, forward gradient learning \citep{baydin2022gradients, silver2021learning}, greedy layer-wise strategy \citep{nokland2019training}, synthetic gradients \citep{jaderberg2017decoupled} and others \citep{lv2024full, boopathy2022train, hinton2022forward, singhal2023guess}.

Nevertheless, ZO methods have emerged as the most effective solutions for memory-efficient LLM fine-tuning \cite{malladi2023fine}. The most apparent distinction among existing ZO approaches lies in the choice of the sampling distribution for the random direction $v$.
Commonly used options include the uniform distribution over the unit sphere $RS(1)^d_{\|\cdot\|}$ \citep{flaxman2004online}, the standard multivariate Gaussian distribution $\mathcal{N}(0,I)$ \citep{ghadimi2013stochastic,nesterov2017random}, and a uniform discrete distribution over the vectors of the canonical (one-hot) basis \citep{duchi2015optimal}. These choices yield unbiased gradient estimators of the smoothed function $f$, which form the foundation of the theoretical analyses of the ZO methods.

Recent works have explored complementary strategies for making ZO optimization more practical and memory-efficient. ZO-AdaMM \cite{chen2019zo} generalizes adaptive-moment methods to the gradient-free regime, combining momentum with per-coordinate adaptive step-sizes using only function evaluations. JAGUAR \cite{veprikov2024new,petrov2025leveraging} implements a momentum-focused ZO design and related matrix-aware extensions, emphasizing minimal per-iteration oracle cost and memory-efficient updates for large-model fine-tuning.

\subsection{ZO optimization with adaptive direction sampling}

Recent advances have aimed to inject richer curvature information into ZO procedures to improve stability and convergence. One of the first approaches in this direction was proposed in HiZOO \cite{zhao2024second}: it augments zero-order gradient (\ref{eq:zo_oracle}) with precondition matrix, using a Hessian-informed strategy to mitigate heterogeneous curvature across parameters.

Another recent perspective casts ZO finite-difference procedures as instances of policy optimization. In the work \cite{qiu2025zeroth}, authors show that standard ZO finite-difference estimators are mathematically equivalent to a single-step policy optimization objective and, equivalently, to a REINForCE-style gradient estimator \cite{williams1992simple} with a particular baseline. Building on this insight, they propose ZoAR method, which incorporates policy-optimization–inspired variance reduction through an averaged baseline and query reuse. 

A further contribution is presented in \cite{seung2025low}, which introduces LOREN: a curvature-aware ZO method that formulates gradient preconditioning as adaptive estimation of an anisotropic perturbation distribution, employs a low-rank block-diagonal preconditioner within a natural evolution strategies framework, and uses a REINForCE leave-one-out estimator to improve convergence and search efficiency.

In contrast to these approaches, our method focuses on adaptively learning the mean of the sampling distribution rather than estimating or approximating second-order curvature information. Unlike HiZOO and LOREN, which rely on explicit or implicit Hessian-related preconditioning and therefore introduce additional structural assumptions and memory overhead, our framework remains lightweight and preserves the simplicity of first-order ZO updates. At the same time, compared to policy-optimization–based ZO methods such as ZoAR, which primarily target variance reduction through baselines and query reuse, we directly bias the sampling distribution toward empirically useful directions by learning the distribution mean. This allows us to improve directional alignment without modifying the underlying ZO estimator or increasing algorithmic complexity, making the proposed framework a modular plug-in that can be seamlessly combined with a wide range of existing ZO optimizers.


\section{Algorithm and Theoretical Analysis under Directional Setup}\label{sec:algo}

\subsection{Notation and Assumptions}
For any vector $x \in \mathbb{R}^d$ we denote by $\|x\| = \sqrt{\sum_{i=1}^d x_i^2}$ the Euclidean $l_2$ norm. For any non-zero $x \neq 0$ we define its normalized version as $\overline{x} = \frac{x}{\|x\|}$.
We conduct the theoretical analysis in non-convex setting under the standard assumptions \citep{bertsekas1997nonlinear, ghadimi2013stochastic, richtarik2021ef21, li2021page}. 
\begin{assumption}\label{ass:lip}
    Let $f$ be $L$-smooth, i.e., for all $x_1,x_2 \in \mathbb{R}^d$ it holds that:
    $
        \|\nabla f(x_1) - \nabla f(x_2)\| \leq L\|x_1 - x_2\|.
    $
\end{assumption}

\begin{assumption}\label{ass:minimizer}
    Let there exist a minimizer of $f$ i.e., an $x^*$ that 
    $
        f(x) \geq f(x^*)
    $
    holds for all $x \in \mathbb{R}^d$.
\end{assumption}

\subsection{Motivation}

We consider solving the  problem \eqref{eq:main_problem} using optimization along random directions, i.e., via a Directional Gradient Descent (DGD) \citep{silver2021learning}:
\begin{eqnarray}\label{eq:cgd_iteration}
    x^{t+1} = x^t - \gamma_x^t \cdot \overline{v^t}\langle \overline{v^t}, \nabla f(x^t)\rangle,
\end{eqnarray}
where the sampling distribution of $v^t$ is arbitrary.
We note that ZO oracle \eqref{eq:zo_oracle} reduces to DGD estimator under the $\tau \to 0$ condition. It makes the role of the direction sampling distribution identical in both formulations and allows us to focus on the DGD setup in the theoretical analysis.

Intuitively, the quality of the estimator improves as the sampled direction becomes better aligned with the gradient, and it attains its maximum accuracy in the case of a deterministic choice of $v^t$ collinear with $\nabla f (x^t)$.
To formalize this intuition, we introduce the notion of \textit{gradient alignment}:
\begin{eqnarray}\label{eq:C_def}
    C^t := \left\langle\overline{v^t},\overline{\nabla{f(x^t)}}\right\rangle^2,
\end{eqnarray}
and use it in Lemma~\ref{lem:descent}, inspired by \citep{cheng2021convergence}.

\begin{lemma}\label{lem:descent}
Under Assumptions \ref{ass:lip} and \ref{ass:minimizer} for iterations of DGD \eqref{eq:cgd_iteration}
the following inequality holds:
    \begin{eqnarray*}
    &&\sum_{t=0}^{T-1} \frac{1}{T} \E\l[ \l(\gamma_x^t - \frac{L {\gamma_x^t}^2}{2} \r)\E\l[C^{t}|\mathcal{F}^{t-1}\r]\|\nabla f(x^{t})\|^2 \r] \notag \leq \frac{f(x^0)-f(x^{*})}{T},
\end{eqnarray*}
where $\mathcal{F}^t$ is a $\sigma$-algebra generated by $\{x^0, x^1, \dots, x^t\}$.
\end{lemma}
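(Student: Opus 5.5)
We will run the standard descent-lemma argument for smooth functions, with the gradient alignment $C^t$ from \eqref{eq:C_def} standing in for the usual variance term. By Assumption~\ref{ass:lip}, for any $x,y$ we have $f(y)\le f(x)+\langle\nabla f(x),y-x\rangle+\frac{L}{2}\|y-x\|^2$. We apply this with $x=x^t$ and $y=x^{t+1}$ from the DGD step \eqref{eq:cgd_iteration}. Since $\overline{v^t}$ has unit norm, the update is $x^{t+1}-x^t=-\gamma_x^t\overline{v^t}\langle\overline{v^t},\nabla f(x^t)\rangle$. The linear term therefore equals $-\gamma_x^t\langle\overline{v^t},\nabla f(x^t)\rangle^2$. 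The quadratic term equals $\frac{L(\gamma_x^t)^2}{2}\langle\overline{v^t},\nabla f(x^t)\rangle^2$. Using $\langle\overline{v^t},\nabla f(x^t)\rangle^2=C^t\|\nabla f(x^t)\|^2$, which holds by definition (and trivially when $\nabla f(x^t)=0$), we obtain the one-step inequality
\[
f(x^{t+1})\le f(x^t)-\Bigl(\gamma_x^t-\tfrac{L(\gamma_x^t)^2}{2}\Bigr)C^t\|\nabla f(x^t)\|^2 .
\]

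Next we take conditional expectations. We treat $x^t$ and the stepsize $\gamma_x^t$ as known once the past is fixed, i.e.\ measurable with respect to the conditioning $\sigma$-algebra. The only fresh randomness is then the direction $v^t$. Conditioning removes $C^t$ in favour of $\E[C^t\mid\mathcal F^{t-1}]$ and leaves the factor $\bigl(\gamma_x^t-\frac{L(\gamma_x^t)^2}{2}\bigr)\|\nabla f(x^t)\|^2$ outside. We then take full expectations and use the tower property. Rearranging gives
\[
\E\Bigl[\Bigl(\gamma_x^t-\tfrac{L(\gamma_x^t)^2}{2}\Bigr)\E[C^t\mid\mathcal F^{t-1}]\|\nabla f(x^t)\|^2\Bigr]\le \E f(x^t)-\E f(x^{t+1}).
\]

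Finally we sum over $t=0,\dots,T-1$, so the right-hand side telescopes to $f(x^0)-\E f(x^T)$. By Assumption~\ref{ass:minimizer}, $f(x^T)\ge f(x^*)$, so this is at most $f(x^0)-f(x^*)$. Dividing by $T$ yields the claimed bound.

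The one delicate point is the conditioning step, which is where the indexing must be checked. The statement conditions on $\mathcal F^{t-1}$, generated by $x^0,\dots,x^{t-1}$, while $x^t$ (and hence $\|\nabla f(x^t)\|^2$) is determined only after $v^{t-1}$ is drawn. To pull $\|\nabla f(x^t)\|^2$ out of the conditional expectation, we read the filtration as containing everything that determines $x^t$ and the policy used to sample $v^t$, but not $v^t$ itself. This reading matches the paper's convention that $\mathcal F^{t-1}$ is the information available before sampling $v^t$. Under it, $x^t$ and $\gamma_x^t$ are measurable and $C^t$ is the only random factor. If $\gamma_x^t$ were allowed to depend on $v^t$, the factorization would fail. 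We therefore state explicitly that the stepsize and the sampling distribution at step $t$ are fixed before $v^t$ is drawn.
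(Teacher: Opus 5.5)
Your proposal is correct and follows the paper's proof step for step. Both apply the $L$-smoothness upper bound to the DGD step to get $f(x^{t+1})-f(x^t)\le\bigl(\frac{L(\gamma_x^t)^2}{2}-\gamma_x^t\bigr)C^t\|\nabla f(x^t)\|^2$, condition with $x^t$ and $\gamma_x^t$ treated as known, telescope, and finish with Assumption~\ref{ass:minimizer}. Your remark on the filtration is well taken: the paper simply asserts that $\nabla f(x^t)$ and $\gamma_x^t$ are $\mathcal{F}^{t-1}$-measurable, which only holds under the ``information available before $v^t$ is drawn'' reading you adopt. Likewise, your telescoped endpoint $\E f(x^T)$ is the correct form of what the paper writes as $\E f(x^{t-1})$.
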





One of its key features is that Lemma \ref{lem:descent} applies to an arbitrary sampling distribution. Crucially for our analysis, it does not rely on the unbiasedness of the oracle, in contrast to commonly used approaches \citep{nesterov2017random}. This significantly broadens the range of settings that can be analysed within this framework.

The simplest choice of the sampling distribution is $v^t \sim \mathcal{N}(\mu^t \equiv 0,\varepsilon^2I)$: a Gaussian distribution with zero mean, which yields an unbiased estimator in the DGD setting. After normalizing the sampled vector, this yields a uniform distribution over the unit sphere $\overline{v^t} {\sim} RS(1)^d_{\|\cdot\|}$. Notably, by leveraging Lemma~\ref{lem:descent}, we recover the convergence rate for DGD with $v^t \overset{i.i.d.}{\sim} \mathcal{N}(0,\varepsilon^2I)$ established in \citep{nesterov2017random}, as Stated in Corollary~\ref{lem:ZO_SGD}.

\begin{corollary}\label{lem:ZO_SGD}
    Let in the setting of Lemma \ref{lem:descent} $v^t \overset{i.i.d.}{\sim} \mathcal{N}(\mu^t \equiv 0,\varepsilon^2I)$ and $\gamma^t_x \leq \frac{1}{2L}$. Then the following bound holds:
    \begin{eqnarray*}
        \E \| \nabla f(\overline{x}^T)\|^2 \leq 
        \frac{2d}{\gamma_x^t T}(f(x^0) - f(x^*)),
    \end{eqnarray*}
    where $\overline{x}^T$ is chosen uniformly from $\{x^0,x^1,\dots,x^{t-1}\}$.
\end{corollary}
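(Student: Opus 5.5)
Corollary~\ref{lem:ZO_SGD} follows from Lemma~\ref{lem:descent} once two quantities are controlled: the conditional expectation $\E[C^t\mid\mathcal{F}^{t-1}]$ for isotropic Gaussian directions, and the step-size prefactor $\gamma_x^t - L{\gamma_x^t}^2/2$. I treat the step size as a constant $\gamma_x^t\equiv\gamma_x$, which is how the right-hand side of the Corollary is written.

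\textbf{Step 1: the alignment term.} Since $v^t\sim\mathcal{N}(0,\varepsilon^2 I)$ is independent of $\mathcal{F}^{t-1}$, the normalized vector $\overline{v^t}$ is uniform on the unit sphere, independently of $x^t$. By rotational invariance I may take $\overline{\nabla f(x^t)}=e_1$; the case $\nabla f(x^t)=0$ contributes nothing to the sum and can be ignored. Then $C^t=(\overline{v^t})_1^2$. Because $\sum_{i=1}^d(\overline{v^t})_i^2=1$ and the coordinates are exchangeable, each coordinate has second moment $1/d$. Hence
\[
\E\left[C^t\mid\mathcal{F}^{t-1}\right]=\frac1d
\]
exactly, with no dependence on $\varepsilon$.

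\textbf{Step 2: the step-size factor.} For $\gamma_x\le\frac{1}{2L}$ we have $L\gamma_x/2\le 1/4$, so
\[
\gamma_x-\frac{L\gamma_x^2}{2}\ \ge\ \frac{3}{4}\gamma_x\ \ge\ \frac{\gamma_x}{2}.
\]

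\textbf{Step 3: combining.} Substituting both bounds into Lemma~\ref{lem:descent} gives
\[
\frac{\gamma_x}{2d}\cdot\frac1T\sum_{t=0}^{T-1}\E\|\nabla f(x^t)\|^2\ \le\ \frac{f(x^0)-f(x^*)}{T}.
\]
With $\overline{x}^T$ drawn uniformly from the iterates, the left-hand average equals $\E\|\nabla f(\overline{x}^T)\|^2$. Rearranging yields $\E\|\nabla f(\overline{x}^T)\|^2\le \frac{2d}{\gamma_x T}\bigl(f(x^0)-f(x^*)\bigr)$.

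The only genuinely non-routine point is Step 1: justifying that conditioning on $\mathcal{F}^{t-1}$ leaves $\overline{v^t}$ uniform on the sphere and independent of $x^t$. This holds because $x^t$ is determined by $v^0,\dots,v^{t-1}$ and the $v^t$ are i.i.d. The rest is direct substitution.
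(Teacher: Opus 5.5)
Your proof is correct and matches the paper's argument: the paper likewise uses $\gamma_x^t \le \frac{1}{2L}$ to get $\gamma_x^t - \frac{L(\gamma_x^t)^2}{2} \ge \frac{\gamma_x^t}{2}$, and obtains $\E[C^t\mid\mathcal{F}^{t-1}] = \frac{1}{d}$ by aligning $e_1$ with $\nabla f(x^t)$ and using exchangeability of the coordinates together with $\sum_i v_i^2/\|v\|^2 = 1$, before substituting into Lemma~\ref{lem:descent}. Your explicit remarks on constant step size, on $v^t$ being independent of the past, and on the $\nabla f(x^t)=0$ case are small clarifications that the paper leaves implicit.
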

The proof of Corollary \ref{lem:ZO_SGD} is provided in Appendix \ref{sec:mis_proofs} and directly reveals that the $d$ factor arises from $\E\l[C^{t}|\mathcal{F}^{t-1}\r] \sim 1/d$ for $v^t \overset{i.i.d.}{\sim} \mathcal{N}(0,\varepsilon^2I)$. 

The above intuition shows that, on the one hand, Lemma \ref{lem:descent} indeed captures the variance effect induced by the randomness of the direction, and on the other hand, its application yields tight bounds.
This makes expected gradient alignment $\E\l[C^{t}|\mathcal{F}^{t-1}\r]$ a key concept in the analysis of the DGD setup. 
To obtain theoretical acceleration, it is necessary to maintain this quantity at an $\mathcal{O}(1)$ level, in contrast to the baseline $\mathcal{O}(1/d)$ regime. To this end, we propose to leverage information about the objective function obtained during previous iterations by learning the parameters of the sampling distribution and introducing control over them to increase the value of $\E\l[C^{t}|\mathcal{F}^{t-1}\r]$.

Using a Gaussian distribution gives rise to two design choices: adapting the mean vector $\mu^t$ or the covariance matrix $\Sigma^t$. We focus on learning $\mu^t$ and fix $\Sigma^t \equiv \varepsilon^2 I$, since in the context of memory-efficient optimization storing a full covariance matrix is infeasible, whereas learning the mean vector incurs only an additional $\mathcal{O}(d)$ memory overhead, which is acceptable in the ZO setting.


\subsection{Theoretical Algorithm}

The above mentioned intuition, along with Lemma \ref{lem:descent}, motivates the design of Algorithm \ref{alg:theor1}: \texttt{LDSD} (\textbf{L}earnable \textbf{D}irection \textbf{S}ampling \textbf{D}escent). 
\begin{algorithm}{\textsc{LDSD}}\label{alg:theor1}
\begin{algorithmic}[1]
\State \textbf{Input:} Initial points $x^0$ and $\mu^0$, number of iterations $T$, variance scale $\varepsilon$.
\For{$t = 0,1,2,\dots, T-1$}
    \State Sample $v^t \sim \mathcal{N}(\mu^t, \varepsilon^2I)$ and set $\overline{v^t} = v^t / \| v^t \|$ \label{line:sampling}
    \State $g_x^t = \overline{v^t} \l\langle\overline{v^t},{\nabla f(x^t)}\r\rangle$\label{line:g_x}
    \State $g_{\mu}^t = \nabla_{\mu} \E\left[C^{t}|\mathcal{F}^{t-1}\right]\Big|_{\mu = \mu^{t}}$ \label{line:g_v}
    \State $\mu^{t+1} = \mu^{t} + \gamma_\mu^t g_{\mu}^t$ \label{line:mu_update}
    \State $x^{t+1} = x^t - \gamma_x^t g_x^t$ \label{line:x_update_th}
\EndFor
\end{algorithmic}
\end{algorithm}
Lines \ref{line:sampling}, \ref{line:g_x}, \ref{line:x_update_th} of Algorithm \ref{alg:theor1} reflect the classical use of DGD estimator.
In contrast, Lines \ref{line:g_v} and \ref{line:mu_update} incorporate the proposed mechanism for learning the sampling direction.
Specifically, to attain the values $\E\l[ C^{t}\r | \mathcal{F}^{t-1}]$ of expected gradient alignment as large as possible, we adopt a reinforcement learning perspective by interpreting the mean vector $\mu^t$ as a policy and $\E\l[ C^{t}\r | \mathcal{F}^{t-1}]$ as the reward signal.
Accordingly, at Line \ref{line:sampling} of Algorithm \ref{alg:theor1}, a direction $v^t$ is sampled from the current policy, while at Lines \ref{line:g_v} and \ref{line:mu_update} the policy is updated via a gradient ascent step.

Importantly, in practice the derivative of the expected gradient alignment $\nabla_{\mu} \E\l[C^{t}|\mathcal{F}^{t-1}\r]\Big|_{\mu = \mu^{t}}$ is not accessible. Consequently, while the analysis employs the update in Line \ref{line:g_v}, for practical implementation we propose using a well-established policy gradient approach \citep{williams1992simple} from reinforcement learning. We refer to Sections \ref{sec:toy_exp} and \ref{sec:ZO_framework} for further details on this implementation.

\subsection{The dynamics of gradient alignment}

Now we present Theorem \ref{th:main}, which characterizes the evolution of the expected gradient alignment $\E\l[C^{t}|\mathcal{F}^{t-1}\r]$ over a single iteration of Algorithm \ref{alg:theor1} and constitutes the core of the theoretical analysis of the proposed approach.

\begin{theorem}\label{th:main}
    Let Assumption \ref{ass:lip} hold and $d \geq 16$. Then for iterations of Algorithm \ref{alg:theor1} with $\varepsilon \leq \mathcal{O}\l(d^{-\frac{3}{2}}  \cos(\beta^t)(1-\cos(\beta^t)) \|\mu^t\|\r), \gamma_\mu^t \leq \mathcal{O}\l(\|\mu^t\|^2\r),\gamma_x^t \leq \mathcal{O}\l(\frac{\cos^2(\beta^t)(1-\cos(\beta^t))^2}{L}\r)$ the following inequality holds:
    \begin{eqnarray*}
    &&\E\l[ C^{t+1}\r | \mathcal{F}^t]
      =
      \E\l[ C^{t}\r | \mathcal{F}^{t-1}]
      \notag +
     \gamma^t_\mu
     \l(\frac{\cos(\beta^t)(1-\cos(\beta^t))}{24 \|\mu^t\|}\r)^2,
\end{eqnarray*}
where $\beta^t$ is the angle between $\mu^t$ and $\nabla f (x^t)$.
\end{theorem}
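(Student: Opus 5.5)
The plan is to reduce everything to a one-dimensional function of the angle between the policy mean and the gradient, and then carry out a one-step perturbation analysis in the two small quantities $\gamma_\mu^t$ and $\gamma_x^t$.

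\textbf{Reduction to an angle function.} Fix $\mathcal{F}^{t-1}$ and write
\[
\Phi(\mu,g):=\E_{v\sim\mathcal{N}(\mu,\varepsilon^2I)}\langle \overline{v},\overline{g}\rangle^2 ,
\]
so that $\E[C^t|\mathcal{F}^{t-1}]=\Phi(\mu^t,\nabla f(x^t))$ and $\E[C^{t+1}|\mathcal{F}^{t}]=\Phi(\mu^{t+1},\nabla f(x^{t+1}))$. In the regime $\varepsilon\ll\|\mu\|$ (guaranteed by the assumed upper bound on $\varepsilon$, which carries an extra $d^{-3/2}$ factor), I would expand $\overline{v}=\overline{\mu}+\frac{\varepsilon}{\|\mu\|}P_\mu^\perp\xi+O(\varepsilon^2/\|\mu\|^2)$ with $\xi\sim\mathcal{N}(0,I)$. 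This gives $\Phi(\mu,g)=\cos^2\beta+R$, where $\beta=\angle(\mu,g)$ and $|R|\lesssim d\,\varepsilon^2/\|\mu\|^2$. The factor $d$ comes from $\E\|P^\perp_\mu\xi\|^2=d-1$, and the condition $d\ge16$ lets us absorb constants.

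\textbf{Step 1: the $\mu$-update.} Differentiating the same expansion, $\nabla_\mu\Phi=\nabla_\mu\cos^2\beta+\nabla_\mu R$. For the main term,
\[
\|\nabla_\mu\cos^2\beta\|=\frac{2|\cos\beta|\sin\beta}{\|\mu\|},
\]
since $\nabla_\mu\cos\beta=P^\perp_\mu\overline{g}/\|\mu\|$. The error term satisfies $\|\nabla_\mu R\|\lesssim d^{3/2}\varepsilon/\|\mu\|^2$, and the assumed bound on $\varepsilon$ makes it at most a constant fraction of the main term, with $1-\cos\beta$ serving as a lower surrogate for $\sin\beta$. A second-order Taylor step with Hessian bound $\|\nabla^2_\mu\Phi\|\lesssim1/\|\mu\|^2$, combined with $\gamma_\mu^t\le O(\|\mu^t\|^2)$, then gives
\[
\Phi(\mu^{t+1},g)\ge\Phi(\mu^t,g)+\tfrac12\gamma_\mu^t\|\nabla_\mu\Phi\|^2\ge\Phi(\mu^t,g)+c\,\gamma_\mu^t\frac{\cos^2\beta(1-\cos\beta)^2}{\|\mu\|^2}.
\]

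\textbf{Step 2: the $x$-update.} Here $\|x^{t+1}-x^t\|\le\gamma_x^t\|\nabla f(x^t)\|$, so by Assumption~\ref{ass:lip} the normalized gradient rotates by an angle $\lesssim L\gamma_x^t$ (relative to $\|\nabla f\|$). Hence $|\Phi(\mu^{t+1},\nabla f(x^{t+1}))-\Phi(\mu^{t+1},\nabla f(x^t))|\lesssim L\gamma_x^t$. The stated bound $\gamma_x^t\le O(\cos^2\beta(1-\cos\beta)^2/L)$ is chosen so that this loss eats at most part of the gain from Step 1. Collecting constants yields the factor $(1/24)^2$.

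\textbf{Main obstacles and how I would handle them.} First, the statement is written as an equality; I would actually prove the inequality $\ge$, which is the meaningful content, and read the displayed relation that way. Second, the drift from the $x$-step as bounded above is proportional to $\gamma_x^t$, not $\gamma_\mu^t$. Making it dominated therefore requires coupling $\gamma_x^t$ to $\gamma_\mu^t/\|\mu^t\|^2$ in the hidden constants; I would try this coupling first and track it explicitly. Third, controlling $\nabla_\mu R$ uniformly, rather than just $R$, needs a careful Gaussian integration-by-parts (Stein) computation. That is where the exponent $d^{3/2}$ should emerge, and I expect it to be the most technical part.
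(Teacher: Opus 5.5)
Your proposal is correct, and its skeleton matches the paper's proof. The rotation of $\overline{\nabla f(x^t)}$ costs an additive $O(L\gamma_x^t)$ (the paper's $-2a^t$). The $\mu$-step is handled by an ascent inequality with a Hessian bound of order $1/\|\mu^t\|^2$. The gain is driven by $\|\nabla\psi_a(\mu)\|=2\cos\beta\sin\beta/\|\mu\|\ge 2\cos\beta(1-\cos\beta)/\|\mu\|$, where $\psi_a(u)=\langle a,\overline{u}\rangle^2$ and $a=\overline{\nabla f(x^t)}$.

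The genuine difference is how you lower-bound $\|\nabla_\mu F_a(\mu)\|$.
\begin{itemize}
\item \textbf{The paper's route.} It uses the score-function form $\nabla F_a(\mu)=\varepsilon^{-1}\mathbb{E}[\psi_a(\mu+\varepsilon v)\,v]$ and splits at $\|v\|=\|\mu\|/(2\varepsilon)$. The outer part is removed by Gaussian concentration. Inside the ball, $\psi_a$ is Taylor-expanded: the zeroth-order term vanishes by symmetry, the first-order term gives $\|\nabla\psi_a(\mu)\|/8$ via Laurent--Massart, and the Lagrange remainder costs $40\varepsilon d^{3/2}/\|\mu\|^2$ through $\mathbb{E}\|v\|^3\le d^{3/2}$. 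This is where both the $d^{-3/2}$ in the $\varepsilon$-condition and the constant $1/24=1/8-2/24$ come from.
\item \textbf{Your route.} Made rigorous, your perturbative argument is the pathwise identity $\nabla_\mu F_a(\mu)=\mathbb{E}[\nabla\psi_a(\mu+\varepsilon\xi)]$. This is exactly the Gaussian integration by parts you anticipate, and the paper derives it via Stokes for its Hessian bound but never uses it for the gradient. Then $\nabla R=\mathbb{E}[\nabla\psi_a(\mu+\varepsilon\xi)-\nabla\psi_a(\mu)]$ is bounded on $\{\|\varepsilon\xi\|\le\|\mu\|/2\}$ by the same estimate $\|\nabla^2\psi_a(u)\|\le 20/\|u\|^2$, giving $O(\sqrt{d}\,\varepsilon/\|\mu\|^2)$. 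Off that ball, use $\|\nabla\psi_a(u)\|\le 1/\|u\|$ together with a Gaussian tail. You need this cutoff anyway, because your expansion of $\overline{v}$ is only valid on that ball.
\end{itemize}
Your route gives a leading coefficient close to $1$ instead of $1/8$, dispenses with Laurent--Massart, and needs only $\varepsilon\lesssim d^{-1/2}\cos\beta(1-\cos\beta)\|\mu\|$. The $d^{3/2}$ you expect to ``emerge'' is an artifact of the paper's route, and your better constant yields the $(1/24)^2$ \emph{a fortiori}. In exchange, the paper's route works directly with the REINFORCE representation that the practical algorithm estimates.

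Both caveats you raise are right.
\begin{itemize}
\item \textbf{Equality versus inequality.} The paper's proof also establishes only $\ge$; the final $=$ is a typo.
\item \textbf{Coupling of step sizes.} The paper's last step imposes $4a^t\le\cos^2\beta^t(1-\cos\beta^t)^2/(3\cdot2^9\cdot5^2)$ independently of $\gamma_\mu^t$. This can absorb the drift only if $\gamma_\mu^t$ is bounded below by a constant multiple of $\|\mu^t\|^2$, and the subsequent corollary and lemmas indeed fix $\gamma_\mu^t=\|\mu^t\|^2/640$. Under the one-sided hypothesis $\gamma_\mu^t\le O(\|\mu^t\|^2)$ the claim fails, for instance with $\gamma_\mu^t=0$ whenever the gradient rotates away from $\mu^t$. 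So the two-sided coupling you propose is genuinely needed.
\end{itemize}

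Do track the constants explicitly. Write $c=\cos\beta^t(1-\cos\beta^t)$. Even with $\gamma_\mu^t=\|\mu^t\|^2/640$, the paper's choice of $\gamma_x^t$ permits $2a^t$ up to $c^2/76800$. The available gain is only $c^2/184320$, so its numerical constants do not close as written.
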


Under the assumptions of Theorem \ref{th:main}, the values of $\E\l[ C^{t}\r | \mathcal{F}^{t-1}]$ increase monotonically.
This leads to acceleration in convergence guarantees in Lemma \ref{lem:descent} and corresponds to an improvement of the policy $\mu^t$. 
At the same time, the theorem does not provide guarantees for a naive initialization $\mu^0 = 0$ (Figure \ref{pic:landscape} illustrates that this point corresponds to a saddle point of the function $\E\l[C^{t}|\mathcal{F}^{t-1}\r]$). It is consistent with the observation that $\nabla_{\mu} \E\l[C^{t}|\mathcal{F}^{t-1}\r]\Big|_{\mu = 0} = 0$, together with the update rule in Line $\ref{line:mu_update}$ of Algorithm $\ref{alg:theor1}$, results in unchanged values of the policy: $\mu^{t+1} = \mu^t$ and, consequently, expected gradient alignment $\E\l[C^{t+1}|\mathcal{F}^{t}\r] = \E\l[C^{t}|\mathcal{F}^{t-1}\r]$. We also note that growth of the gradient alignment is not guaranteed in the cases of collinearity or orthogonality between $\mu^t$ and $\nabla f(x^t)$, since the gradient of the alignment objective vanishes in these configurations. However, Algorithm \ref{alg:theor1} almost surely avoids these degenerate configurations when initialized at random, as they form a measure-zero set in the parameter space.

\paragraph{Intuition.} We now provide further intuition for the concepts underlying Theorem \ref{th:main}. Figure \ref{pic:landscape} depicts the landscape of the function $\mathbb{E}[C^{t}\mid \mathcal{F}^{t-1}]$ as a function of $\mu^t$ in dimension $d=2$.

\begin{wrapfigure}[18]{r}{0.45\columnwidth}
    \centering
    \vspace{-1.7em}
    \includegraphics[width=\linewidth]{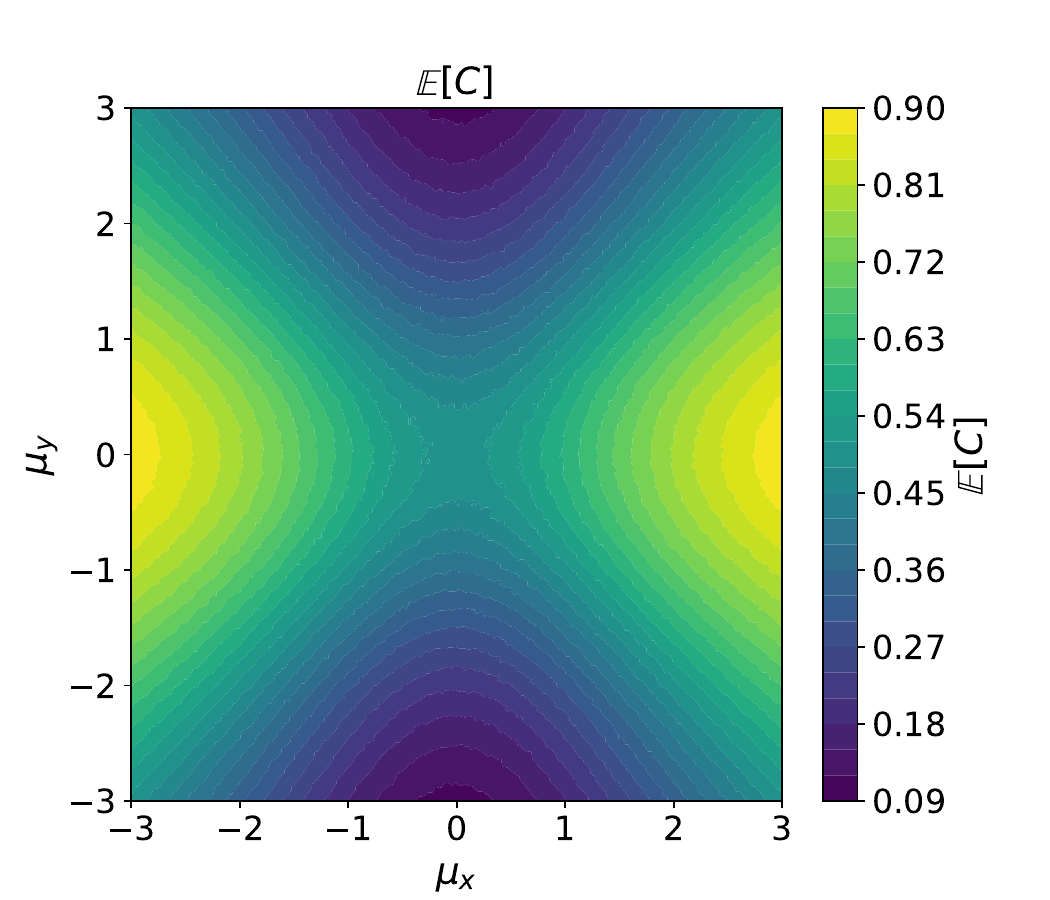}
    \caption{Landscape of the function $\mathbb{E}\left[ C^{t} \mid \mathcal{F}^{t-1}\right]$ with respect to $\mu^t$, for $\nabla f(x^t) = (1,0)^\top$ and $d=2$.}
    \label{pic:landscape}
\end{wrapfigure}
The figure clearly reveals the intuitive saddle-like structure of the objective: increasing orthogonality of $\mu^t$ and $\nabla f(x^t)$ leads to sampling vectors $v^t$ that form large angles with $\nabla f(x^t)$ and therefore yield small values of $C^t$. In contrast, alignment between $\mu^t$ and $\nabla f(x^t)$ results in larger values of both $C^t$ and its conditional expectation.
This landscape additionally exhibits symmetry with respect to the transformation $\mu^t \mapsto -\mu^t$, reflecting the fact that $C^t$ depends only on the squared cosine of the angle $\beta^t$.

Each iteration of Algorithm \ref{alg:theor1} performs a gradient ascent step on the function $\mathbb{E}[C^{t}\mid \mathcal{F}^{t-1}]$ with respect to $\mu^t$. From the RL perspective, it corresponds to a policy update driven by the observed reward signal $\mathbb{E}[C^{t}\mid \mathcal{F}^{t-1}]$.
Subsequently, a model parameter update is performed in Line \ref{line:x_update_th}, which in turn changes the direction of the true gradient $\nabla f(x^t)$. Geometrically, this corresponds to a rotation of the saddle while keeping $\mu^t$ fixed. 
Assumption \ref{ass:lip} on the Lipschitz continuity of $f$, together with the explicit form of the update in Line \ref{line:x_update_th}, allows to bound the angle of this rotation in terms of the step size $\gamma_x^t$. This bounded rotation may also be interpreted as a bounded change of the environment, which may reduce the relevance of the policy obtained at the previous iteration. The analysis in Theorem \ref{th:main} is built upon carefully balancing these two competing effects: the gradient ascent step in $\mu^t$ and the change in the direction of $\nabla f(x^t)$ induced by the update of the model parameters $x^t$.

The saddle structure of $\mathbb{E}[C^{t}\mid \mathcal{F}^{t-1}]$ immediately implies non-concavity. It prevents us from directly deriving its increase in Theorem \ref{th:main} using standard results from gradient ascent theory, thereby significantly complicating the theoretical analysis. Due to the non-standard challenges and the resulting proof technique, we provide both a high-level sketch and a complete proof in Appendix \ref{sec:mis_proofs}.


To characterize the full dynamics of $\E\l[ C^{t}\r | \mathcal{F}^{t-1}]$ across the iterations of Algorithm \ref{alg:theor1} we prove the following result.
\begin{lemma}\label{lem:multiitteration_eval}
    Let in the setting of Theorem \ref{th:main} $\delta \leq \cos(\beta^0)\leq 1-\delta$  with $\delta \in \l(0;\frac{1}{2}\r)$, $\|\mu^t\|\leq M$ for all $t$ and $\varepsilon = \mathcal{O}\l(d^{-\frac{3}{2}}  \delta M\r),\gamma_x^t = \mathcal{O}\l(\frac{\delta^2}{L}\r)$. Then $\E\l[ C^t|\mathcal{F}^{t-1}\r]$ increase monotonically to the value $$\cos\l(\frac{\delta}{32d} + L\gamma_x^t + \arccos(1-\delta)\r)^2 (1-e^{-1}) = \mathcal{O}(1)$$ and then never decrease below.
\end{lemma}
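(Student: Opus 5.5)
We iterate Theorem~\ref{th:main}. The key quantity is $\cos(\beta^t)$, the cosine of the angle between the policy $\mu^t$ and $\nabla f(x^t)$. By Theorem~\ref{th:main}, the increment of $\E[C^t|\mathcal{F}^{t-1}]$ at step $t$ is $\gamma_\mu^t\bigl(\cos(\beta^t)(1-\cos(\beta^t))/(24\|\mu^t\|)\bigr)^2 \geq 0$. The step-size and variance conditions of that theorem are met uniformly in $t$ as long as $\cos(\beta^t)$ stays in a band $[\delta',1-\delta']$ with $\delta'\asymp\delta$. In that case $\cos(\beta^t)(1-\cos(\beta^t))\gtrsim\delta$, and with $\|\mu^t\|\le M$ the conditions reduce to $\varepsilon=\mathcal{O}(d^{-3/2}\delta M)$ and $\gamma_x^t=\mathcal{O}(\delta^2/L)$, exactly as assumed. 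So the plan is an induction on $t$ with hypothesis ``$\cos(\beta^t)\ge\delta$, or $\beta^t$ is already small enough that the target alignment is reached''. While the hypothesis holds, monotone increase follows directly from Theorem~\ref{th:main}.

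The first step is to relate $\E[C^t|\mathcal{F}^{t-1}]$ to $\beta^t$. Since $\varepsilon\ll\|\mu^t\|/d^{3/2}$, the normalized sample $\overline{v^t}$ concentrates around $\overline{\mu^t}$. Its angular deviation $\theta$ is of order $\varepsilon\sqrt d/\|\mu^t\|$, which I would bound by $\delta/(32d)$ on an event of probability at least $1-e^{-1}$, using a Gaussian tail bound on $\|v^t-\mu^t\|$. On that event the triangle inequality for angles gives $C^t\ge\cos(\beta^t+\delta/(32d))^2$. Taking expectations and dropping the complement then gives
\[
\E[C^t|\mathcal{F}^{t-1}]\ge(1-e^{-1})\cos\bigl(\beta^t+\tfrac{\delta}{32d}\bigr)^2 .
\]

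The second step controls the rotation of the gradient between iterations. The $x$-update moves $x^t$ by at most $\gamma_x^t\|\nabla f(x^t)\|$, so by Assumption~\ref{ass:lip} the direction of $\nabla f$ rotates by an angle of at most $L\gamma_x^t$ (up to a constant, via $\|\nabla f(x^{t+1})-\nabla f(x^t)\|\le L\gamma_x^t\|\nabla f(x^t)\|$). Meanwhile the $\mu$-step is the gradient of a function of the angle, so it rotates $\mu$ toward $\pm\nabla f(x^t)$ and decreases $\beta^t$ (modulo the $\pi$-symmetry). Combining the two, once $\beta^t\le\arccos(1-\delta)$, i.e.\ once the near-collinear region is reached, the angle at the next step satisfies $\beta^{t+1}\le\arccos(1-\delta)+L\gamma_x^t$. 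The angle therefore never leaves a neighborhood of that size. Plugging this into the first step yields the floor
\[
\cos\bigl(\tfrac{\delta}{32d}+L\gamma_x^t+\arccos(1-\delta)\bigr)^2(1-e^{-1}).
\]
For $\delta<1/2$, $d\ge16$ and small $\gamma_x^t$ the argument of the cosine is bounded away from $\pi/2$, so the floor is $\mathcal{O}(1)$.

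The third step shows that the near-collinear region is reached. While $\delta\le\cos(\beta^t)\le1-\delta$, each step adds at least $\gamma_\mu^t\delta^2/(c\,M^2)$ to the expected alignment, which is bounded by $1$. Hence only finitely many steps can occur before $\cos(\beta^t)$ exceeds $1-\delta$. The lower side $\cos(\beta^t)<\delta$ cannot be hit: on the band, $\beta$ decreases by the $\mu$-step by more than the $L\gamma_x^t$ rotation, which follows from the constant in $\gamma_x^t=\mathcal{O}(\delta^2/L)$.

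The main obstacle is this comparison of the angular decrease from the $\mu$-step with the adversarial rotation $L\gamma_x^t$, both near $\cos\beta=\delta$ and, more delicately, near $\cos\beta=1-\delta$. There the gradient of $\E[C^t|\mathcal{F}^{t-1}]$ vanishes as $\beta\to0$, so the theorem gives no progress, and one must argue via the rotation bound alone. This is why the final statement is a ``never decrease below'' floor rather than continued monotone growth. I would first try to prove the one-step angle recursion $\beta^{t+1}\le\max\{\beta^t-c\gamma_\mu^t\delta^2/M^2,\ \arccos(1-\delta)\}+L\gamma_x^t$ and derive everything from it.
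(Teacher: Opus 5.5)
Your plan matches the paper's proof in outline, with three ingredients:
\begin{itemize}
\item[(i)] While $\delta\le\cos(\beta^t)\le 1-\delta$, the $\mu$-step turns $\mu^t$ toward $\nabla f(x^t)$ by more than the gradient rotates. So the hypotheses of Theorem~\ref{th:main} persist and $\mathbb{E}[C^t|\mathcal{F}^{t-1}]$ grows. The paper does this through a triangle inequality on $\|\overline{\mu^{t+1}}-\overline{\nabla f(x^{t+1})}\|$, using that the leading term of $\nabla F$ is a multiple of $\nabla\psi_a(\mu^t)$.
\item[(ii)] Once $\cos(\beta^t)>1-\delta$, the per-step rotation bound caps $\beta^t$ at about $\arccos(1-\delta)+L\gamma_x^t$.
\item[(iii)] $\overline{v^t}$ lies within angle $\delta/(32d)$ of $\overline{\mu^t}$ with probability at least $1-e^{-1}$, which gives the floor.
\end{itemize}
Your Step 1, via a tail bound on $\|v^t-\mu^t\|$, is the correct version of what the paper intends with Laurent--Massart; its integration region $\|v\|^2\le 2\sqrt d$ is a slip. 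Your bounded-increment argument, that the band is left through the top after finitely many steps, is something the paper never spells out.

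The step that does not go through as written is your reduction of the Theorem's hypotheses. The condition $\varepsilon\lesssim d^{-3/2}\cos(\beta^t)(1-\cos(\beta^t))\|\mu^t\|$ needs a \emph{lower} bound $\|\mu^t\|\ge M$. So does your deviation bound $\varepsilon\sqrt d/\|\mu^t\|\le\delta/(32d)$. An upper bound $\|\mu^t\|\le M$ gives neither, and the paper's appendix accordingly takes $M=\min_t\|\mu^t\|$.

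With $M$ a lower bound, the uniform gain in your Step 3 must instead come from fixing $\gamma_\mu^t=\|\mu^t\|^2/640$, as in the paper's corollary. This makes the increment a constant multiple of $\delta^2$, independent of $\|\mu^t\|$. A lower bound of this kind on $\gamma_\mu^t$ is also what lets the $\mu$-rotation, of order $\gamma_\mu^t\sin(2\beta^t)/\|\mu^t\|^2$, beat the $\mathcal{O}(L\gamma_x^t)$ gradient rotation. The Theorem only supplies the upper bound $\gamma_\mu^t=\mathcal{O}(\|\mu^t\|^2)$, which is not enough.

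Finally, $F_a(\mu)$, with $a=\overline{\nabla f(x^t)}$, is not a function of the angle alone; it also depends on $\|\mu\|/\varepsilon$. So your claim that the $\mu$-step never increases $\beta^t$ needs proof. This matters most for $\cos(\beta^t)>1-\delta$, where the Theorem's estimates are unavailable and the paper merely asserts it.

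One clean argument uses rotational symmetry. Write $u=\mu+\varepsilon\xi$ in a basis with $e_1=\overline{\mu}$. Then $\mathbb{E}[\overline{u}\,\overline{u}^\top]=\lambda_\parallel\overline{\mu}\,\overline{\mu}^\top+\lambda_\perp(I-\overline{\mu}\,\overline{\mu}^\top)$, hence $F_a(\mu)=\lambda_\perp+(\lambda_\parallel-\lambda_\perp)\cos^2\beta$. Moreover $\lambda_\parallel>\lambda_\perp$, because $(\|\mu\|+\varepsilon\xi_1)^2$ stochastically dominates $\varepsilon^2\xi_2^2$. Consequently:
\begin{itemize}
\item the component of $\nabla F_a(\mu)$ orthogonal to $\mu$ is $(\lambda_\parallel-\lambda_\perp)\nabla\psi_a(\mu)$, which points toward $a$;
\item the radial component only rescales $\mu$;
\item the bound $\gamma_\mu^t\le\|\mu^t\|^2/640$ prevents overshooting past $a$.
\end{itemize}
This proves your one-step angle recursion rigorously, both inside and outside the band.
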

Lemma \ref{lem:multiitteration_eval} establishes once the policy is initialized such that $\delta \leq \cos(\beta^0)\leq 1-\delta$, for an appropriate choice of $\gamma_x^t, \gamma_\mu^t,\varepsilon$ the expected gradient alignment $\E\l[ C^{t}\r | \mathcal{F}^{t-1}]$ increases monotonically up to a certain level and subsequently oscillates within a neighbourhood of $1$. This stands in contrast to the $\mathcal{O}(1/d)$ behaviour established in Corollary \ref{lem:ZO_SGD}.


\subsection{Convergence Guaranties}

Lemma \ref{lem:multiitteration_eval} naturally leads to two complementary settings for policy initialization $\mu^0$. The first corresponds to choosing $\mu^0$ randomly without any prior knowledge of the target function $f$.
This setup is analysed in Appendix \ref{sec:no_prior_init} and can be interpreted as a sequential growth of gradient alignment up to a saturation level. In particular, it leads to a non-standard convergence criterion formulated in terms of a weighted average of squared gradient norms evaluated along the iterates of the algorithm.

The second regime for $\mu^0$ initialization corresponds to leveraging prior information about the target function $f$ (the direction of the true gradient $\nabla f(x^0)$ at the initial point) to immediately operate in the phase of Lemma \ref{lem:multiitteration_eval} where $\E\l[ C^{t}\r | \mathcal{F}^{t-1}]$ oscillates in a neighbourhood of $1$. 
\begin{lemma}\label{lem:parallel}
    Let Assumptions \ref{ass:lip}, \ref{ass:minimizer} be satisfied, $d \geq 16$  and $\mu^0$ collinear to $\nabla f(x^0)$. Then for iterations of Algorithm \ref{alg:theor1} with $\delta \leq \cos(\beta^0)\leq 1-\delta$, $\delta \in \l(0;\frac{1}{2}\r)$ and $\varepsilon = \mathcal{O}\l(d^{-\frac{3}{2}}  \delta M\r),M = \min_{t}\{\|\mu^t\|\},\gamma_x^t = \mathcal{O}\l(\frac{\delta^2}{L}\r),\gamma_\mu^t = \mathcal{O}\l(\|\mu^t\|^2\r)$, the following bound holds:
    $$\E[\|\nabla f(\overline{x}^T)\|^2] \leq \mathcal{O}\l( \frac{1}{T\gamma_x}\r) (f(x^0) - f(x^*)).$$
\end{lemma}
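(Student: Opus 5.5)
The argument combines Lemma~\ref{lem:descent} with a uniform lower bound on the expected gradient alignment $\E[C^t|\mathcal{F}^{t-1}]$ along the whole trajectory, supplied by Theorem~\ref{th:main} and Lemma~\ref{lem:multiitteration_eval}. The statement is somewhat inconsistent: it assumes $\mu^0$ collinear to $\nabla f(x^0)$ but also $\delta\le\cos(\beta^0)\le 1-\delta$. I read it as ``$\mu^0$ is initialized in the well-aligned regime.'' That is, the policy starts in the saturated phase of Lemma~\ref{lem:multiitteration_eval}, where $\E[C^t|\mathcal{F}^{t-1}]$ is already at least the constant
$$c_\delta := \cos\!\left(\frac{\delta}{32d} + L\gamma_x + \arccos(1-\delta)\right)^2 (1-e^{-1}) = \mathcal{O}(1).$$
If exact collinearity is intended, I would instead first compute $\E[C^0|\mathcal{F}^{-1}]$ directly. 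With $\varepsilon \ll \|\mu^0\|/d^{3/2}$, $\overline{v^0}$ concentrates around $\overline{\mu^0}$ with angular deviation of order $\varepsilon\sqrt d/\|\mu^0\|$, so $\E[C^0] \ge 1-\mathcal{O}(d\varepsilon^2/\|\mu^0\|^2)$, which already exceeds $c_\delta$.

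\textbf{Step 1: a uniform lower bound $\E[C^t|\mathcal{F}^{t-1}]\ge c_\delta$ for all $t$.} I would argue by induction. Suppose the alignment is at least $c_\delta$ at step $t$. There are two cases.
\begin{itemize}
\item If the angle $\beta^t$ still satisfies $\cos\beta^t\le 1-\delta$, Theorem~\ref{th:main} (with $M=\min_t\|\mu^t\|$ making the parameter conditions uniform in $t$) gives a nonnegative increment, so the alignment does not decrease.
\item If $\mu^t$ is closer to the gradient, the only loss comes from the rotation of $\nabla f$ caused by the $x$-update. By Assumption~\ref{ass:lip}, $\|\nabla f(x^{t+1})-\nabla f(x^t)\|\le L\gamma_x\|\nabla f(x^t)\|$, so the gradient direction rotates by at most $\mathcal{O}(L\gamma_x)$. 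This is exactly the slack built into the argument of the cosine in $c_\delta$.
\end{itemize}
This is precisely the ``never decrease below'' clause of Lemma~\ref{lem:multiitteration_eval}, so I would invoke it directly and only check that the step-size hypotheses coincide.

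\textbf{Step 2: plug into Lemma~\ref{lem:descent}.} With $\gamma_x^t=\gamma_x=\mathcal{O}(\delta^2/L)\le 1/L$, we have $\gamma_x - L\gamma_x^2/2\ge \gamma_x/2$. Using Step 1 inside the expectation gives
$$\frac{\gamma_x c_\delta}{2}\cdot\frac1T\sum_{t=0}^{T-1}\E\|\nabla f(x^t)\|^2 \le \frac{f(x^0)-f(x^*)}{T}.$$
Taking $\overline{x}^T$ uniform over the iterates then yields $\E\|\nabla f(\overline{x}^T)\|^2\le \frac{2}{c_\delta\gamma_x T}(f(x^0)-f(x^*))$. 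Since $c_\delta=\mathcal{O}(1)$ does not depend on $d$ (the $\delta/(32d)$ term only helps), this is the claimed $\mathcal{O}(1/(T\gamma_x))$ bound, with no factor of $d$, in contrast to Corollary~\ref{lem:ZO_SGD}.

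\textbf{Main obstacle.} The delicate part is Step 1. Theorem~\ref{th:main} degenerates when $\cos\beta^t\to1$, since the increment vanishes and the condition on $\varepsilon$ becomes void. So near collinearity the lower bound cannot come from the ascent step. It must come from a direct estimate of $\E[C^t]$ in terms of $\beta^t$ for small $\varepsilon/\|\mu^t\|$, combined with the $\mathcal{O}(L\gamma_x)$ rotation bound. The factor $(1-e^{-1})$ in $c_\delta$ suggests handling this via a Gaussian tail/concentration estimate for the normalized sample. I would try to reuse that estimate from the proof of Lemma~\ref{lem:multiitteration_eval} rather than rederive it.
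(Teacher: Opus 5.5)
Your proposal is correct and follows essentially the paper's route. Like you, the paper reads collinear initialization as starting in the saturated phase of Lemma~\ref{lem:multiitteration_eval} and uses its ``never decrease below'' clause to get a uniform lower bound on $\E[C^t|\mathcal{F}^{t-1}]$, then plugs this together with $\gamma_x - L\gamma_x^2/2 \ge \gamma_x/2$ into Lemma~\ref{lem:descent}; the only difference is that it fixes $\delta = \frac14$ to get $\E[C^t|\mathcal{F}^{t-1}] \ge \frac15$ and the explicit constant $10$, whereas you keep $c_\delta$ general, which is fine since $c_\delta$ is bounded below by an absolute constant for $\delta \in (0,\frac12)$.
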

    
\paragraph{Discussion.} We note that the dimension $d$ does not appear explicitly in the theoretical bounds of Lemma \ref{lem:parallel}. In contrast, classical directional, coordinate or ZO methods incur the polynomial dependence on $d$ in their convergence guarantees \citep{nesterov2017random, veprikov2024new,hanzely2018sega}. This slowdown is in fact unavoidable for uniformly random directions from the perspective of lower bounds \citep{lee2013efficient}, which establish an inherent $\mathcal{O}(d)$ factor for standard ZO methods. Our approach overcomes this barrier by learning the parameters of the sampling distribution, effectively steering the sampled directions to better align with the gradient and thereby removing the explicit dependence on $d$ in the convergence rate.
    
It is worth noting that, in theory, the quality of the Monte-Carlo approximation $g_\mu^t$ obtained with a fixed number of samples $v^t$ per iteration depends on the dimension $d$. However, in practice a small number of samples, often $\mathcal{O}(1)$, is sufficient to obtain a stable Monte-Carlo estimate \citep{schulman2017proximal}.

Finally, we observe that after normalization the sampling distribution is invariant under simultaneous rescaling of the mean vector $\mu$ and the variance $\varepsilon^2$.
This invariance also appears in choice of $\varepsilon \sim d^{-\frac{3}{2}}  \delta M \leq d^{-\frac{3}{2}}  \delta \|\mu^t\|$ in the analysis. This suggests that constraining $\|\mu\| = 1$ may be a natural design choice and a direction for future work.

\subsection{Toy Experiment}\label{sec:toy_exp}
To validate the theoretical results, we conduct a toy experiment consisting of training a linear regression model on the a9a \citep{chang2011libsvm} dataset with access to directional derivatives. As a baseline, we consider an algorithm that employs the estimator with $K$ Monte Carlo samples:
\begin{eqnarray}\label{eq:toy_estimator}
    g_x^t = \frac{1}{K} \sum_{k=1}^K \overline v_k^t \langle \overline v_k^t, \nabla f(x^t)\rangle,
\end{eqnarray}
with $v_k^t \sim \mathcal{N}(0,I)$. 
For \texttt{LDSD}, we use the same approximation \eqref{eq:toy_estimator} with the choice $v_k^t \sim \mathcal{N}(\mu^t,\varepsilon^2 I)$. For proposed learning mechanism we apply a log-derivative trick \citep{williams1992simple} to derive a Monte Carlo estimator of $\nabla_{\mu} \E\l[C^{t}|\mathcal{F}^{t-1}\r]\Big|_{\mu = \mu^{t}}$ from the Line \ref{line:g_v} of Algorithm \ref{alg:theor1}. 
We denote the density of a Gaussian distribution by $\pi_\mu(\cdot)$ and introduce $C_k^t = \langle{\overline{v_k^t}, \overline{\nabla f(x^t)}}\rangle^2$ for each sampled direction $v^t_k$. Accordingly,
\begin{align*}
    \nabla_{\mu}\E\l[C^{t}|\mathcal{F}^{t-1}\r]\Big|_{\mu = \mu^{t}} &= \int C^t \l(\nabla_\mu \log\pi_{\mu^t}\r) \mathrm{d}\pi_{\mu^t} \notag\approx \frac{1}{K}\sum_{k=1}^K C_k^t \l(\nabla_\mu \log\pi_{\mu^t}(v_k^t) \r).
\end{align*}

Since $\pi_{\mu^t}(\cdot)$ is a Gaussian distribution density, we have $\nabla_\mu \log\pi_{\mu^t}(v_k^t) = \frac{v_k^t - \mu^t}{\varepsilon^2}$. For policy updates we utilize 
\begin{eqnarray*}
    g_\mu^t = \frac{1}{K}\sum_{k=1}^K\l( C^t_k - b^t\r) \frac{v_k^t - \mu^t}{\varepsilon^2},
\end{eqnarray*}
where $b^t = \frac{1}{K}\sum_{k=1}^K C^t_k$ is a mean-baseline, introduced to reduce the estimator variance \cite{williams1992simple}.

We set $K=5$ both for the baseline and \texttt{LDSD} and compare them in terms of the alignment between the true gradient and its estimator $\cos(g_x^t, \nabla f(x^t))$, as well as the norm of the gradient $\| \nabla f(x^t ) \|$: the quantity involved in convergence guaranties. The results are reported in Figure \ref{fig:toy}. 

\begin{figure}[h]
    \centering
    \begin{subfigure}
        \centering
        \includegraphics[width=0.48\linewidth]{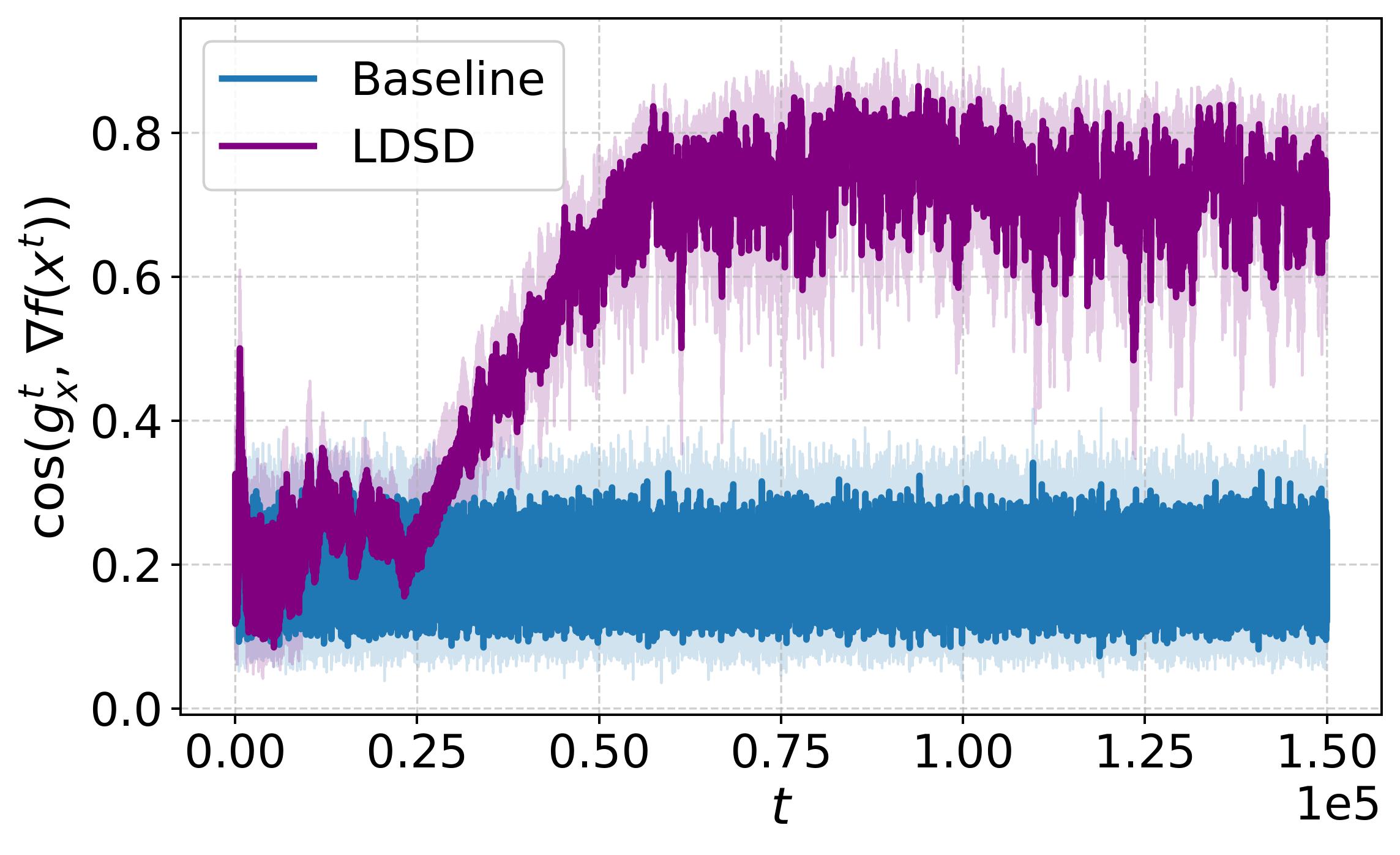}
    \end{subfigure}
    \hfill
    \begin{subfigure}
        \centering
        \includegraphics[width=0.48\linewidth]{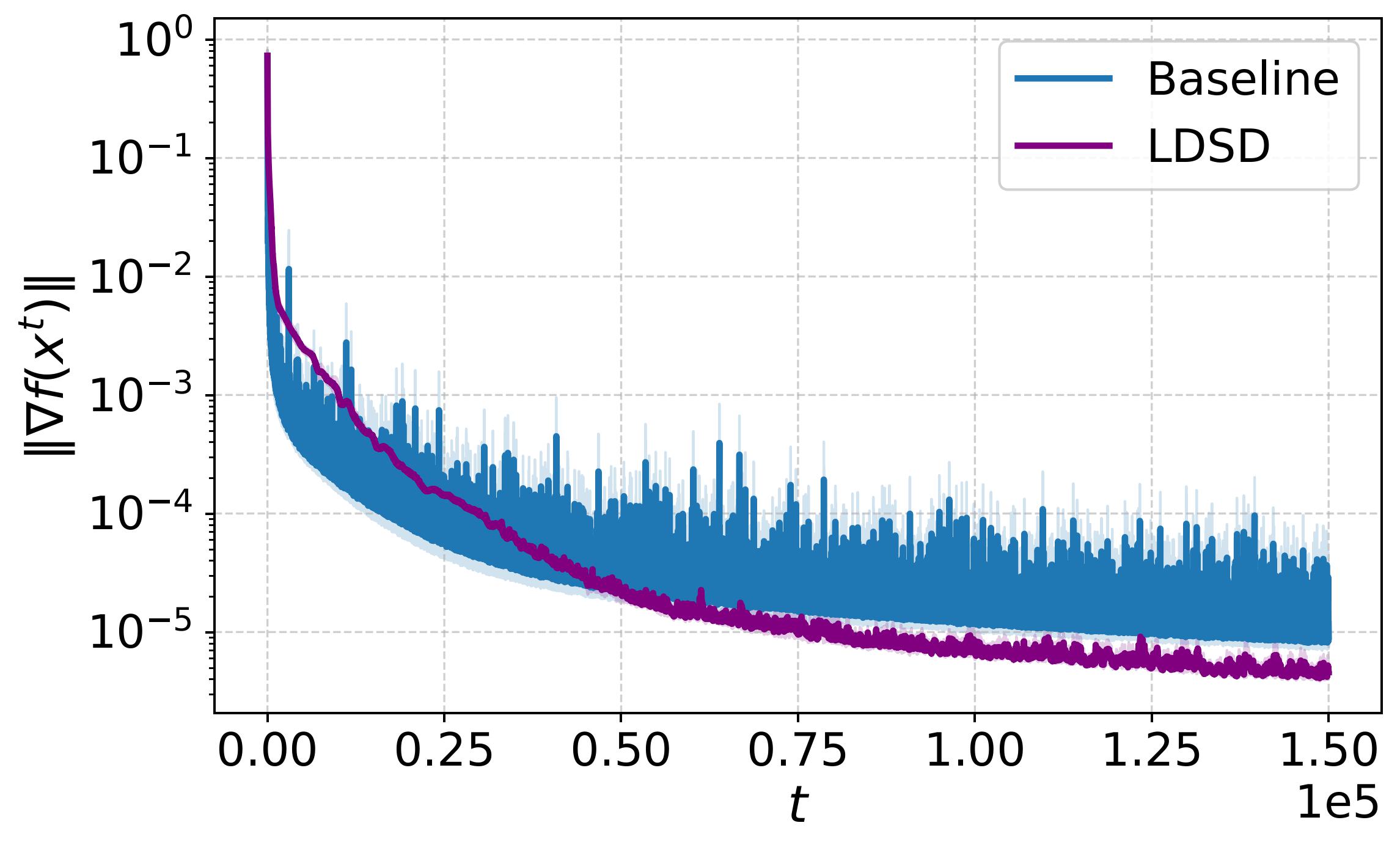}
    \end{subfigure}

    \caption{Comparison of \texttt{LDSD} and the baseline on the a9a regression task.}
    \label{fig:toy}
\end{figure}

The first plot in Figure \ref{fig:toy} reveals learning the sampling direction in \texttt{LDSD} indeed leads to strong alignment between the estimator $g_\mu^t$ and the true gradient $\nabla f(x^t)$. This behaviour fully agrees with Lemma \ref{lem:multiitteration_eval} and the intuition discussed in Section \ref{sec:algo}: the alignment initially increases monotonically and subsequently oscillates in a neighbourhood of $\mathcal{O}(1)$. 

\section{Zero-order Framework}\label{sec:ZO_framework}
Building on Algorithm \ref{alg:theor1}, which presented the first-order instantiation of our update rules, we now develop the corresponding ZO formulation in Algorithm~\ref{alg:zo_framework} by replacing the directional derivative $g_x^t$ with a zero-order oracle (\ref{eq:zo_oracle}), and the gradient $g_\mu^t$ with its REINForCE estimation \cite{williams1992simple}. Both substitutions yield unbiased estimators of the corresponding first-order gradients \cite{feng2023stochastic}, thereby preserving the theoretical structure of the original algorithm in a purely zero-order setting.

\begin{algorithm}{\textsc{ZO-LDSD}}\label{alg:zo_framework}
\begin{algorithmic}[1]
\State \textbf{Input:} Initial points $x^0, \mu^0$, number of iterations $T$, number of oracle calls $K$, variance $\varepsilon$, stepsizes $\gamma_x, \gamma_\mu$
\For{$t = 0,1,2,\dots, T-1$}
    \State Sample $\{v^i\}_{i=1}^K \sim \mathcal{N}(\mu^t, \varepsilon^2I)$ 
    \State Set $v_*^t = \arg\min\limits_{v\in \{v^i\}_{i=1}^K} \l\{f(x^t + \tau v)\r\}$ \label{line:v_star}
    \State $g_x^t = \frac{f(x^t + \tau v_*^t) - f(x^t - \tau v_*^t)}{2\tau}\cdot v_*^t$
    \State $g_{\mu}^t = \frac{1}{K}\sum\limits_{i=1}^K \frac{K\cdot f(x^t + \tau v^i) - \sum\limits_{j=1}^{K} f(x^t + \tau v^j)}{K-1} \cdot \frac{v^i - \mu^t}{\varepsilon^2}$ \label{line:g_mu}
    \State $x^{t+1} = x^t - \gamma_x \cdot g_{x}^t$ \label{line:x_update}
    \State $\mu^{t+1} = \mu^t + \gamma_\mu \cdot g_{\mu}^t$
\EndFor
\end{algorithmic}
\end{algorithm}

The selection of $v_*^t$ in Line \ref{line:v_star} of Algorithm \ref{alg:zo_framework} implements a lightweight, direction-wise greedy search among the $K$ sampled candidates: by choosing the perturbation that empirically yields the smallest objective after a forward evaluation, the method biases the subsequent $x$-update (Line \ref{line:x_update} of Algorithm \ref{alg:zo_framework}) toward directions that have demonstrated the strongest instantaneous decrease in $f$. This selection improves the practical utility of a limited number of oracle queries and typically reduces the variance of the chosen search direction, at the cost of introducing an intentional selection bias in the sampled direction for the $x$-update. 

Aside from those mechanics and the replacement of true gradients by zero-order estimates, Algorithm~\ref{alg:zo_framework} preserves the structural update logic of Algorithm~\ref{alg:theor1}. The procedure preserves the forward-only nature of ZO methods, requires $K+1$ forward evaluations per iteration, and concentrates sampling mass on empirically useful directions through online updates of $\mu^t$. While classical ZO schemes typically use $K=1$, we set $K=5$ in our experiments as an empirically chosen compromise between selection quality and oracle cost. 
An ablation over $K$ is provided in Section~\ref{sec:abl}.

It is important to note that this framework is not a standalone optimizer but a plug-in module: any existing ZO estimator for the parameter update can be combined with the adaptive sampling routine to improve empirical convergence. In particular, the update at line \ref{line:x_update} of Algorithm \ref{alg:zo_framework} is modular and may be replaced by the parameter-update rule of any other ZO optimizer. This enables straightforward integration of the adaptive sampling mechanism, while leaving the base optimizer's update logic intact.

\section{Experiments} \label{sec:llm_exps}

In this section, we present an empirical evaluation of the proposed zero-order optimization framework for large language model fine-tuning.

\subsection{Experimental setup}

\textbf{Fine-Tuning Task and Schemes.} We evaluate the proposed zero-order methods on the SST-2 sentiment classification benchmark \citep{socher2013recursive}, following the experimental protocol of \citep{zo_bench}. SST-2 is a standard supervised benchmark for sentence-level binary sentiment classification and is commonly used to assess fine-tuning procedures for language models.

\textbf{Models.} Experiments are performed on two representative models: RoBERTa-Large \citep{liu2019roberta} and OPT-1.3B \citep{zhang2022opt}. These models span different architectures and parameter counts, allowing us to examine the behavior of ZO algorithms across realistic fine-tuning settings while keeping the experimental scope focused.

\textbf{Baselines.} We use our framework on established zero-order baselines reported in \citep{zo_bench}: ZO-SGD \citep{ghadimi2013stochastic}, ZO-AdaMM \citep{chen2019zo}, JAGUAR SignSGD \cite{petrov2025leveraging}. All experiments follow the evaluation of \citep{zo_bench} to ensure a fair comparison. Training details are deferred to Appendix \ref{subsec:llm_setup_appendix}.

\textbf{Comparison procedure.} For a fair and controlled evaluation we compare each baseline in two variants: (i) the original baseline implementation, and (ii) the same baseline augmented with our zero-order adaptive sampling framework (Algorithm \ref{alg:zo_framework}). Crucially, in (ii) we do not change any of the original optimizer hyperparameters from (i) (e.g., learning rate, momentum), those remain exactly as in the base method. The only hyperparameters we tune are those introduced by our framework. 

Since Algorithm \ref{alg:zo_framework} performs $6$ ZO oracle calls (forward passes) when $K=5$, we compare baseline methods under a fixed budget of oracle calls rather than a fixed number of iterations to ensure fair evaluation. This leads to two natural comparison strategies for the baselines: either perform $6$ oracle calls per iteration (using $K=5$ samples), or perform more iterations with $2$ oracle calls per iteration (using $K=1$). Both variants are presented in Table~\ref{tab:results}. This design ensures that performance differences arise from the adaptive sampling module itself rather than from differences in computational budget. 
\subsection{Results}
Table \ref{tab:results} reports SST2 test accuracy for OPT-1.3B and RoBERTa-Large under different fine-tuning schemes. 

\begin{table}[h]
    \centering
    \caption{Test accuracy on SST2 for OPT-1.3B and RoBERTa-Large. Best performance among ZO methods is in \textbf{bold}.}
    \resizebox{0.7\linewidth}{!}{
    \begin{tabular}{|l|l|cc|cc|}
    \toprule
    Method & Sampling & \multicolumn{2}{c|}{OPT-1.3B} & \multicolumn{2}{c|}{RoBERTa-Large} \\
    \cmidrule(lr){3-4} \cmidrule(lr){5-6}
    & & FT & LoRA & FT & LoRA \\
    \midrule
    \multirow{5}{*}{ZO-SGD} 
    & Gaussian, & \multirow{2}{*}{0.928} & \multirow{2}{*}{0.854} & \multirow{2}{*}{0.899} & \multirow{2}{*}{0.913} \\
    & 2 forwards, more iterations & & & & \\
    \cmidrule(lr){2-6}
    & Gaussian, & \multirow{2}{*}{0.919} & \multirow{2}{*}{0.850} & \multirow{2}{*}{0.869} & \multirow{2}{*}{0.905} \\
    & 6 forwards, same iterations & & & & \\
    \cmidrule(lr){2-6}
    & \cellcolor{bgcolor2}\textbf{Algorithm~\ref{alg:zo_framework}} & \cellcolor{bgcolor2}\textbf{0.935} & \cellcolor{bgcolor2}\textbf{0.916} & \cellcolor{bgcolor2}\textbf{0.909} & \cellcolor{bgcolor2}\textbf{0.928}
    \\
    \midrule
    \multirow{5}{*}{ZO-AdaMM}
    & Gaussian, & \multirow{2}{*}{0.925} & \multirow{2}{*}{0.912} & \multirow{2}{*}{0.909} & \multirow{2}{*}{0.812} \\
    & 2 forwards, more iterations & & & & \\
    \cmidrule(lr){2-6}
    & Gaussian, & \multirow{2}{*}{0.917} & \multirow{2}{*}{0.905} & \multirow{2}{*}{0.899} & \multirow{2}{*}{0.801} \\
    & 6 forwards, same iterations & & & & \\
    \cmidrule(lr){2-6}
    & \cellcolor{bgcolor2}\textbf{Algorithm~\ref{alg:zo_framework}} & \cellcolor{bgcolor2}\textbf{0.930} & \cellcolor{bgcolor2}\textbf{0.929} & \cellcolor{bgcolor2}\textbf{0.917} & \cellcolor{bgcolor2}\textbf{0.900} \\
    \midrule
    \multirow{5}{*}{JAGUAR}
    & Gaussian, & \multirow{2}{*}{0.925} & \multirow{2}{*}{0.921} & \multirow{2}{*}{0.906} & \multirow{2}{*}{0.905} \\
    & 2 forwards, more iterations & & & & \\
    \cmidrule(lr){2-6}
    & Gaussian, & \multirow{2}{*}{0.914} & \multirow{2}{*}{0.911} & \multirow{2}{*}{0.896} & \multirow{2}{*}{0.893} \\
    & 6 forwards, same iterations & & & & \\
    \cmidrule(lr){2-6}
    & \cellcolor{bgcolor2}\textbf{Algorithm~\ref{alg:zo_framework}} & \cellcolor{bgcolor2}\textbf{0.930} & \cellcolor{bgcolor2}\textbf{0.928} & \cellcolor{bgcolor2}\textbf{0.922} & \cellcolor{bgcolor2}\textbf{0.927} \\
    \bottomrule
    \end{tabular}
    }
    \label{tab:results}
\end{table}

\textbf{Discussion.} 
The results show that replacing Gaussian sampling with our adaptive direction sampling produces consistent improvements across models, optimizers, and fine-tuning modalities. Notably, merely increasing the number of probes with Gaussian sampling does not yield the same benefit: performance gains arise from the combination of multiple candidates and adaptive selection. These findings indicate that our sampling scheme yields more informative finite-difference probes and reduces variance in the resulting ZO gradient proxies. Because the framework is plug-and-play, it improves practical convergence of existing ZO methods without altering their core update rules.



\subsection{Ablation Studies}\label{sec:abl}

In this section, we provide several ablation studies on hyperparameters from Algorithm \ref{alg:zo_framework}. In all experiments we use SST-2 dataset with the RoBERTa-large model. Figure~\ref{fig:zo_hparam_ablation}(a) shows accuracy rising with $K$ up to $K=5$ and then slightly falling; (b) exhibits improvement as $\gamma_{\mu}$ increases until an intermediate optimum after which performance degrades;  and (c) displays a U-shaped dependence on $\varepsilon$ with a clear peak where our method outperforms the Gaussian baseline. These results motivate the default hyperparameter choices used in our experiments.



\makeatletter
\iftwocolumnmode
  \def\brainFigWidth{0.32\columnwidth}
\else
  \def\brainFigWidth{0.32\linewidth}
\fi
\makeatother

\begin{figure}[H]
  \centering
  \begin{minipage}[t]{\brainFigWidth}
    \centering
    \includegraphics[width=\linewidth,height=4.2cm,keepaspectratio]{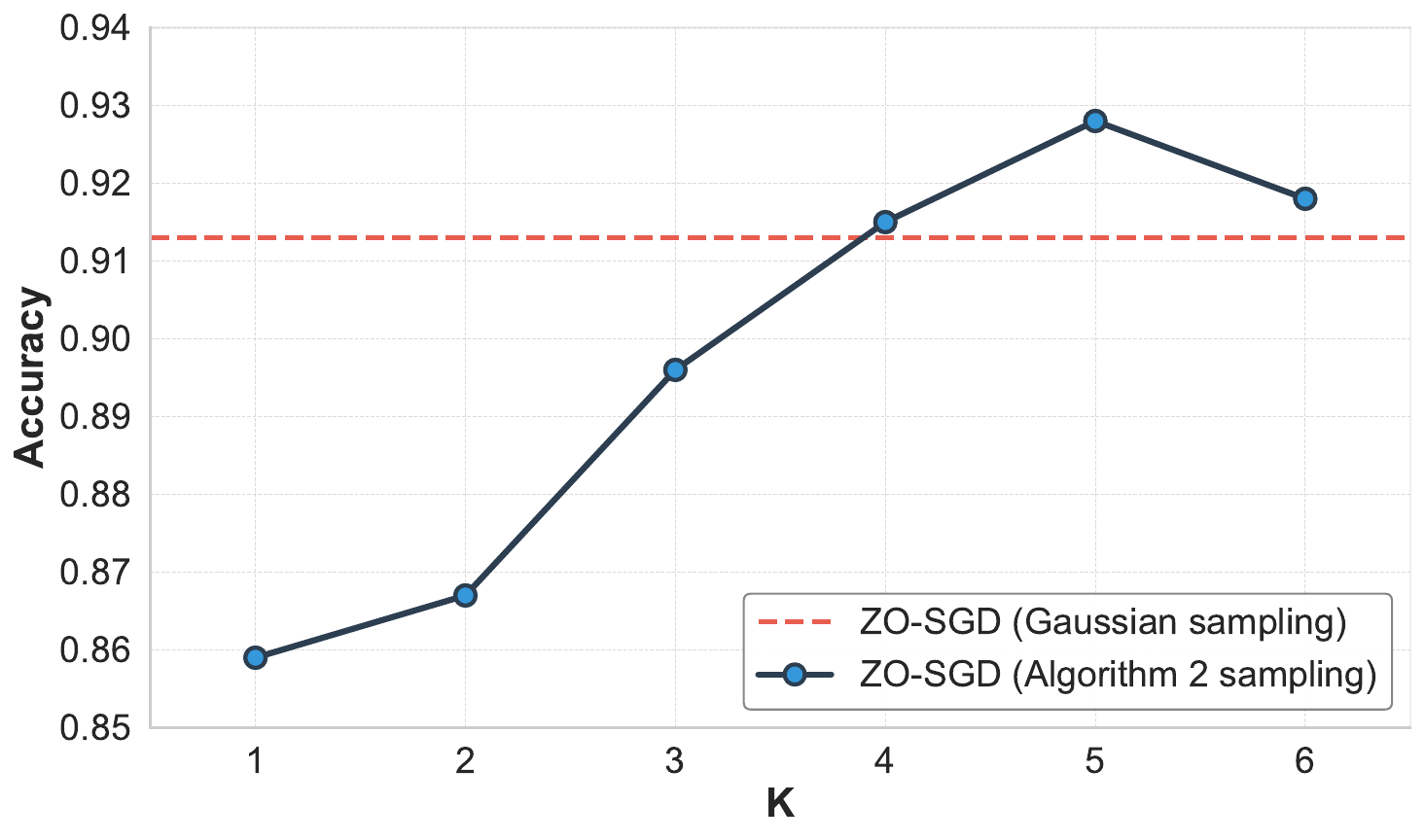}%
    \\[0.5ex]{\small\textbf{(a)} $K$}\label{fig:k_ablation}
  \end{minipage}\hfill
  \begin{minipage}[t]{\brainFigWidth}
    \centering
    \includegraphics[width=\linewidth,height=4.2cm,keepaspectratio]{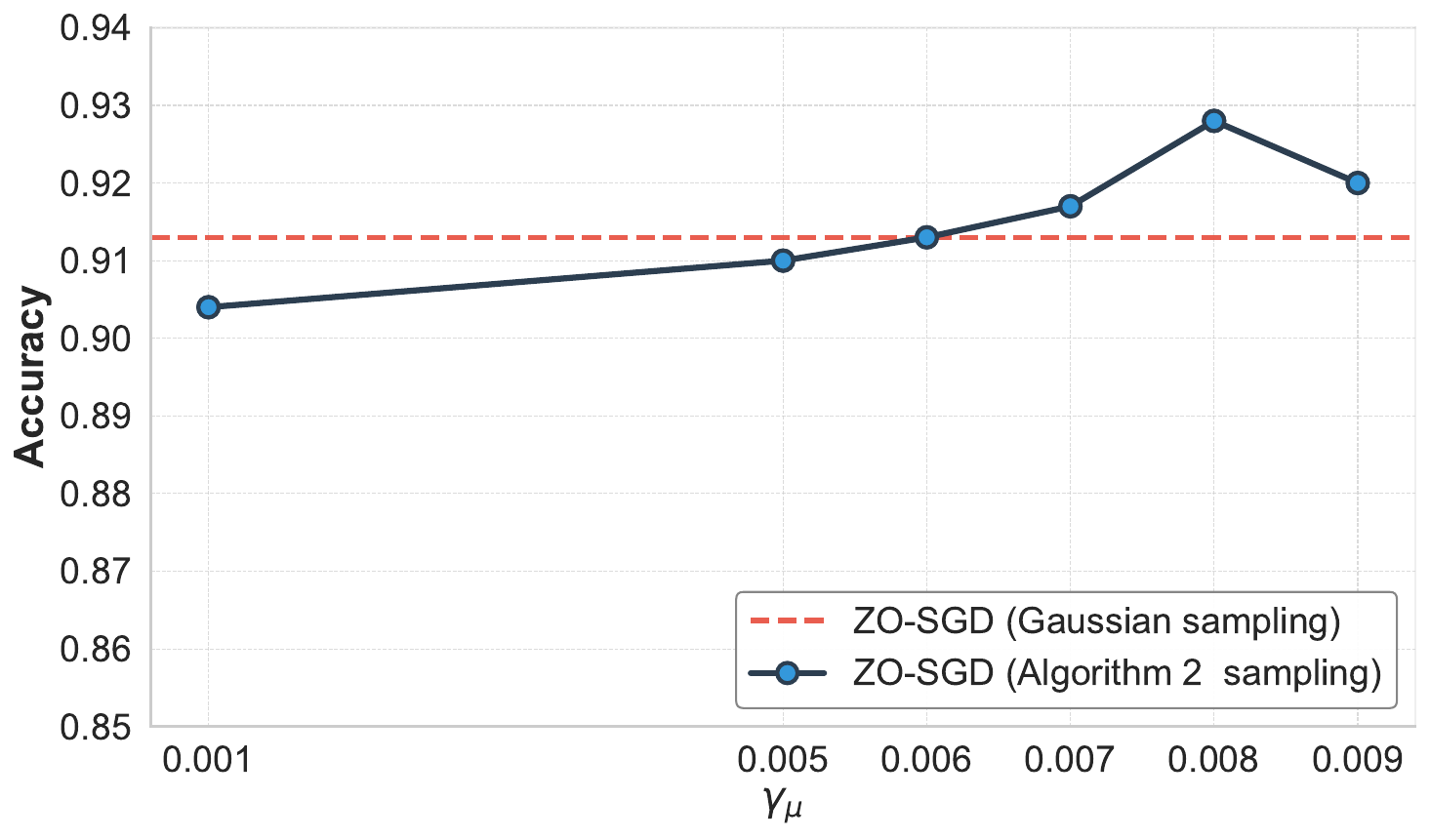}%
    \\[0.5ex]{\small\textbf{(b)} $\gamma_\mu$}\label{fig:lrmu_ablation}
  \end{minipage}\hfill
  \begin{minipage}[t]{\brainFigWidth}
    \centering
    \includegraphics[width=\linewidth,height=4.2cm,keepaspectratio]{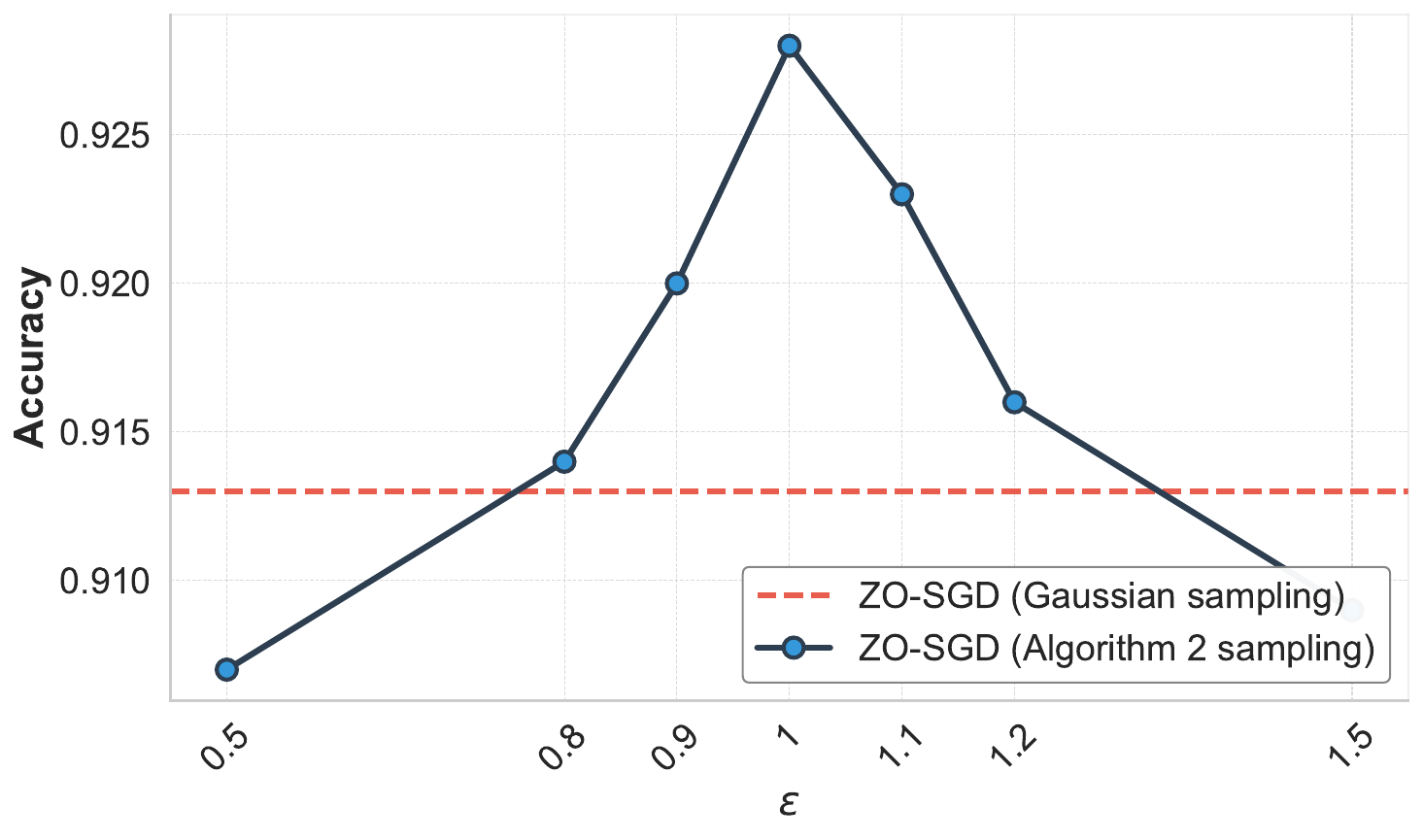}%
    \\[0.5ex]{\small\textbf{(c)} $\varepsilon$}\label{fig:eps_ablation}
  \end{minipage}

  \caption{Test accuracy of ZO-SGD (Algorithm \ref{alg:zo_framework} sampling) on SST-2 RoBERTa-large with LoRA for different hyperparameters.}
  \label{fig:zo_hparam_ablation}
\end{figure}




\section{Conclusion}
In this work, we introduce a novel zero-order optimization framework that treats the sampling distribution over perturbation directions as a learnable policy. Theoretically, we provide convergence guarantees with no explicit dependence on dimension $d$, in stark contrast to the $\mathcal{O}(d)$ degradation inherent in classical zero-order methods. This represents a significant advancement for ZO optimization, achieved by adaptively learning the mean of the sampling distribution to maintain high gradient alignment throughout training.

Empirically, we validate our framework on LLM fine-tuning benchmarks, demonstrating consistent improvements over established ZO baselines including ZO-SGD, ZO-AdaMM, and JAGUAR SignSGD. Our framework operates as a plug-and-play module that enhances existing ZO optimizers without modifying their core hyperparameters. Extensive ablation studies confirm the robustness and practical applicability of our approach.


\end{mainpart}

\begin{appendixpart}

\section{Additional Experimental Details}\label{sec:additional_exp_details}

\subsection{Toy experiment}
The tuned hyperparameters used in toy experiment are as follows:

\begin{itemize}
    \item \textbf{Baseline DGD:} 
    \begin{itemize}
        \item $\gamma_x^t = 200$.
    \end{itemize}
    \item \textbf{LDSD:} 
    \begin{itemize}
        \item $\gamma_x^t = 5$,
        \item $\gamma_\mu^t = 1.4 \cdot 10^{-5}$,
        \item $\varepsilon = 1.2 \cdot 10^{-2}$.
    \end{itemize}
\end{itemize}

\subsection{Experiments on LLM fine-tuning}
\label{subsec:llm_setup_appendix}

All experiments in Section~\ref{sec:llm_exps} used the same core sampling hyperparameters: \(\varepsilon=1\) and \(\gamma_{\mu}=10^{-3}\). For ZO-SGD setups we fixed momentum \(=0.9\). For ZO-Adamm setups we fixed $(\beta_1, \beta_2) = (0.9, 0.999)$. For ZO-Adamm setups we fixed JAGUAR momentum $\beta=0.9$. All methods employed a cosine learning-rate schedule for \(\gamma_x\). Table~\ref{tab:hp_table} reports the base learning rates \(\gamma_x\) used for each method and experimental condition. 

\begin{table}[h]
    \centering
    \caption{Base learning rates \(\gamma_x\) used for OPT-1.3B and RoBERTa-Large experiments with FT and LoRA.}
    \resizebox{\linewidth}{!}{
    \begin{tabular}{l|cc|cc}
    \toprule
    Method & \multicolumn{2}{c}{OPT-1.3B} & \multicolumn{2}{c}{RoBERTa-Large} \\
    \cmidrule(lr){2-3} \cmidrule(lr){4-5}
    & FT & LoRA & FT & LoRA \\
    \midrule
 
    ZO-SGD (Gaussian sampling, 2 forwards, more iterations) & \(4.0\times10^{-8}\) & \(1.0\times10^{-6}\) & \(4.0\times10^{-8}\) & \(5.0\times10^{-4}\) \\
    ZO-SGD (Gaussian sampling, 6 forwards, same iterations) & \(4.0\times10^{-8}\) & \(1.0\times10^{-6}\) & \(4.0\times10^{-8}\) & \(5.0\times10^{-4}\) \\
    ZO-SGD (Algorithm \ref{alg:zo_framework} sampling) & \(4.0\times10^{-8}\) & \(1.0\times10^{-6}\) & \(4.0\times10^{-8}\) & \(5.0\times10^{-4}\) \\
    \midrule
    ZO-AdaMM (Gaussian sampling, 2 forwards, more iterations) & \(1.0\times10^{-5}\) & \(6.0\times10^{-5}\) & \(3.0\times10^{-5}\) & \(3.0\times10^{-3}\) \\
    ZO-AdaMM (Gaussian sampling, 6 forwards, same iterations) & \(1.0\times10^{-5}\) & \(6.0\times10^{-5}\) & \(3.0\times10^{-5}\) & \(3.0\times10^{-3}\) \\
    ZO-AdaMM (Algorithm \ref{alg:zo_framework} sampling) & \(1.0\times10^{-5}\) & \(6.0\times10^{-5}\) & \(3.0\times10^{-5}\) & \(3.0\times10^{-3}\) \\
     \midrule
    JAGUAR SignSGD (Gaussian sampling, 2 forwards, more iterations) & \(2.0\times10^{-6}\) & \(1.0\times10^{-5}\) & \(4.0\times10^{-6}\) & \(3.0\times10^{-4}\) \\
    JAGUAR SignSGD (Gaussian sampling, 6 forwards, same iterations) & \(2.0\times10^{-6}\) & \(1.0\times10^{-5}\) & \(4.0\times10^{-6}\) & \(3.0\times10^{-4}\) \\
    JAGUAR SignSGD (Algorithm \ref{alg:zo_framework} sampling) & \(2.0\times10^{-6}\) & \(1.0\times10^{-5}\) & \(4.0\times10^{-6}\) & \(3.0\times10^{-4}\) \\
    \bottomrule
    \end{tabular}
    }
    \label{tab:hp_table}
\end{table}

\section{Missing proofs}\label{sec:mis_proofs}

\begin{lemmarepeat}{lem:descent}
Under Assumptions \ref{ass:lip} and \ref{ass:minimizer} for iterations of the form 
$$x^{t+1} = x^t - \gamma_x^t \overline{v^t}\langle \overline{v^t}, \nabla f(x^t)\rangle,$$
the following inequality holds.
    \begin{eqnarray*}
    &&\sum_{t=0}^{T-1} \frac{1}{T} \E\l[ \l(\gamma_x^t - \frac{L {\gamma_x^t}^2}{2} \r)\E\l[C^{t}|\mathcal{F}^{t-1}\r]\|\nabla f(x^{t})\|^2 \r] \notag \\
    &&~~~~~~~~~~~~~~~~~~~~~~~~~~~~\leq \frac{f(x^0)-f(x^{*})}{T}.
\end{eqnarray*}
\end{lemmarepeat}
\begin{proof}
    We consider the algorithm step and utilize Assumption \ref{ass:lip}.
    \begin{eqnarray*}
        f(x^{t+1})-f(x^{t}) 
        &=&
        f(x^t-\gamma_x^t\overline v^t \langle\nabla f(x^t),\overline v^t\rangle) - f(x^t)\notag\\
        &\overset{\ref{ass:lip}}{\leq}& 
        -\gamma_x^t\langle\nabla f(x^t),\overline v^t\rangle^2+\frac{L}{2}\|{\gamma_x^t}\overline v^t\langle\nabla f(x^t),\overline v^t\rangle\|^2 \notag \\
        &=&\l( \frac{L {\gamma_x^t}^2}{2} - \gamma_x^t\r)\langle\nabla f(x^t),\overline v^t\rangle^2 \notag \\
        &=&\l( \frac{L {\gamma_x^t}^2}{2} - \gamma_x^t\r)\langle \overline{\nabla f(x^t)},\overline v^t\rangle^2 \|\nabla f(x^t)\|^2 \notag
        .
    \end{eqnarray*}

Conditioning on $\mathcal{F}^t$ and using the $\mathcal{F}^{t-1}$-measurability of $\nabla f(x^t)$ and $\gamma_x^t$, we obtain
\begin{eqnarray*}
    \E\l[f(x^{t+1})|\mathcal{F}^{t-1}\r]-f(x^{t}) 
    \leq \l( \frac{L {\gamma_x^t}^2}{2} - \gamma_x^t\r)\E\l[C^{t}|\mathcal{F}^{t-1}\r]\|\nabla f(x^{t})\|^2.
\end{eqnarray*}
Summing over $t = 0,\dots,T-1$ and considering full expectation we derive
\begin{eqnarray*}
    \sum_{t=0}^{T-1} \frac{1}{T} \E\l[ \l(\gamma_x^t- \frac{L {\gamma_x^t}^2}{2}\r)\E\l[C^{t}|\mathcal{F}^{t-1}\r]\|\nabla f(x^{t})\|^2 \r] \leq \frac{f(x^0)-\E[f(x^{t-1})]}{T}.
\end{eqnarray*}
Applying Assumption \ref{ass:minimizer} finishes the proof.
\end{proof}

\begin{corollaryrepeat}{lem:ZO_SGD}
    Let in the setting of Lemma \ref{lem:descent} $v^t \overset{i.i.d.}{\sim} RS(1)^d_{\|\cdot\|}$ be distributed uniformly on the unit sphere and $\gamma^t_x \leq \frac{1}{2L}$. Then the following bound holds:
    \begin{eqnarray*}
        \E \| \nabla f(\overline{x}^T)\|^2 \leq 
        \frac{2d}{\gamma_x^t T}(f(x^0) - f(x^*)),
    \end{eqnarray*}
    where $\overline{x}^T$ is chosen uniformly from $\{x^0,x^1,\dots,x^{t-1}\}$.
\end{corollaryrepeat}
\begin{proof}
    The choice $\gamma^t_x \leq \frac{1}{2L}$ guarantees:
    \begin{eqnarray}\label{eq:zosgd_1}
        \gamma_x^t- \frac{L {\gamma_x^t}^2}{2} \geq \frac{\gamma_x^t}{2}
    \end{eqnarray}
    Then we need to evaluate $\E\l[C^{t}|\mathcal{F}^{t-1}\r]$. We consider coordinate system where $e_1 ~\|~ \nabla f(x^t)$:
    \begin{eqnarray*}
        \E\l[C^{t}|\mathcal{F}^{t-1}\r] = \E_{v^{t} \sim \mathcal{N}(0,I)}\l[ \frac{v_1^2}{\|v\|^2}\r].
    \end{eqnarray*}
    Due to the distribution symmetry we conclude:
    \begin{eqnarray*}
        \E_{v^{t} \sim \mathcal{N}(0,I)}\l[ \frac{v_1^2}{\|v\|^2}\r] = \E_{v^{t} \sim \mathcal{N}(0,I)}\l[ \frac{v_2^2}{\|v\|^2}\r] = \dots = \E_{v^{t} \sim \mathcal{N}(0,I)}\l[ \frac{v_d^2}{\|v\|^2}\r].
    \end{eqnarray*}
    Then, 
    \begin{eqnarray*}
        d \cdot \E_{v^{t} \sim \mathcal{N}(0,I)}\l[ \frac{v_1^2}{\|v\|^2}\r] 
        = \E_{v^{t} \sim \mathcal{N}(0,I)}\l[ \frac{v_1^2 + v_2^2 + \dots +v_d^2}{\|v\|^2}\r]
        = \E_{v^{t} \sim \mathcal{N}(0,I)}\l[ 1 \r] = 1.
    \end{eqnarray*}
    And we reach:
    \begin{eqnarray}\label{eq:ZOSGD_lem_}
        \E_{v^{t} \sim \mathcal{N}(0,I)}\l[ \frac{v_1^2}{\|v\|^2}\r] = \frac{1}{d}
    \end{eqnarray}
    Substitution of \eqref{eq:ZOSGD_lem_}, \eqref{eq:zosgd_1} into Lemma \ref{lem:descent} and rearranging the terms finishes the proof. 
\end{proof}

\begin{lemma}\label{lem:L_asc}
    Let function $f$ is $L$-smooth, i.e.
    \begin{eqnarray*}
        \|f(x) - f(y)\| \leq \|x - y\|,  \quad \forall x,y\in\mathbb{R}^d,
    \end{eqnarray*}
    then it holds
    \begin{eqnarray*}
        f(y) \geq f(x) + \langle\nabla f(x), y-x\rangle - \frac{L}{2}\|x-y\|^2, \quad \forall x,y\in\mathbb{R}^d.
    \end{eqnarray*}
\end{lemma}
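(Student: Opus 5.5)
This is the standard "ascent" counterpart of the descent lemma, and I would derive it from the fundamental theorem of calculus along the segment from $x$ to $y$. The hypothesis as written, $\|f(x)-f(y)\|\le\|x-y\|$, looks like a typo. Since the conclusion involves $\nabla f$ and the constant $L$, I will read it as Assumption \ref{ass:lip}: $\|\nabla f(x)-\nabla f(y)\|\le L\|x-y\|$, with $f$ continuously differentiable. This is the same inequality already used in the proof of Lemma \ref{lem:descent}, now in its lower-bound form.

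First, set $\phi(s)=f(x+s(y-x))$ for $s\in[0,1]$. Then $\phi$ is $C^1$ with $\phi'(s)=\langle\nabla f(x+s(y-x)),y-x\rangle$, so
\begin{eqnarray*}
f(y)-f(x)-\langle\nabla f(x),y-x\rangle=\int_0^1\langle\nabla f(x+s(y-x))-\nabla f(x),\,y-x\rangle\,ds .
\end{eqnarray*}

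Second, I bound the integrand from below. By Cauchy--Schwarz it is at least $-\|\nabla f(x+s(y-x))-\nabla f(x)\|\,\|y-x\|$. The Lipschitz property of $\nabla f$ then gives a lower bound of $-Ls\|y-x\|^2$.

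Third, integrating $-Ls\|y-x\|^2$ over $s\in[0,1]$ yields $-\frac{L}{2}\|y-x\|^2$. Rearranging gives $f(y)\ge f(x)+\langle\nabla f(x),y-x\rangle-\frac{L}{2}\|x-y\|^2$, which is the claim. Alternatively, one can observe that the standard upper bound applied to $-f$, which is also $L$-smooth, gives the same inequality immediately.

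There is no real obstacle here. The only delicate point is interpreting the hypothesis as $L$-Lipschitz continuity of the gradient rather than of $f$ itself. With only Lipschitz continuity of $f$ and no smoothness of $\nabla f$, the quadratic lower bound would fail in general.
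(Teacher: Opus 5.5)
Your proposal is correct and follows essentially the same route as the paper: integrate $\langle \nabla f(x+s(y-x)), y-x\rangle$ along the segment, subtract $\langle\nabla f(x), y-x\rangle$, apply Cauchy--Schwarz and the $L$-Lipschitz continuity of $\nabla f$, and integrate $Ls$ to get $\frac{L}{2}\|y-x\|^2$. You are also right to read the hypothesis as Lipschitz continuity of the gradient (Assumption \ref{ass:lip}); the paper's own proof uses exactly that property.
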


\begin{proof}
The claim follows from integrating the directional derivative of $f$ along the segment connecting $x$ and $y$ and then applying the Lipschitz continuity of the gradient.
\begin{eqnarray}
f(y) & = & f(x) + \int_0^1 \langle \nabla f(x + \tau (y - x)), y - x \rangle d\tau \notag\\
& = & f(x) + \langle \nabla f(x), y - x \rangle \notag\\
&&+~ \int_0^1 \langle \nabla f(x + \tau (y - x)) - \nabla f(x), y - x \rangle d\tau \notag\\
& \geq & f(x) + \langle \nabla f(x), y - x \rangle \notag\\
&&-~ \int_0^1 \| \nabla f(x + \tau (y - x)) - \nabla f(x) \| \| y - x \| d\tau \notag\\
& \geq & f(x) + \langle \nabla f(x), y - x \rangle - \int_0^1 L \tau \| y - x \|^2 d\tau \notag\\
& = & f(x) + \langle \nabla f(x), y - x \rangle - \frac{L}{2} \| y - x \|^2 . \notag
\end{eqnarray}
\end{proof}

\begin{theoremrepeat}{th:main}
     Let Assumption \ref{ass:lip} hold and $d \geq 16$. Then for iterations of the Algorithm \ref{alg:theor1} with $\varepsilon \leq \frac{d^{-\frac{3}{2}}  \cos(\beta^t)(1-\cos(\beta^t)) \|\mu^t\|}{480}$, $\gamma_\mu^t \leq \frac{\|\mu^t\|^2}{640},\gamma_x^t \leq \frac{\cos^2(\beta^t)(1-\cos(\beta^t))^2}{3\cdot2^{12} \cdot 5^2 L}$ the following inequality holds:
\end{theoremrepeat}
    \begin{eqnarray}\label{eq:main_th_Statement}
     \E\l[ C^{t+1}\r | \mathcal{F}^t]
      &=&
      \E\l[ C^{t}\r | \mathcal{F}^{t-1}]
     +\gamma^t_\mu
     \l(\frac{\cos(\beta^t)(1-\cos(\beta^t))}{24 \|\mu^t\|}\r)^2,
\end{eqnarray}
\textit{where $\beta^t$ is the angle between $\mu^t$ and $\nabla f (x^t)$.}

Due to the non-standard challenges and the resulting proof technique, we provide high-level sketch before the proof.

\paragraph{Proof Sketch of Theorem \ref{th:main}.}
The proof of Theorem \ref{th:main} is deferred to Section \ref{sec:mis_proofs}. It tracks the joint evolution of two quantities: the gradient $\nabla f(x^t)$ and the mean vector $\mu^t$. The main steps are as follows:

    \textbf{(i)}
    We first reduce the gradient dynamics using Assumption \ref{ass:lip}, which yields a relation of the form
    \begin{eqnarray}\label{eq:sk_1}
        \E\l[ C^{t+1}\r | \mathcal{F}^t]  
    \geq -2a^t 
    + F_{\overline{\nabla f(x^t)}}(\mu^{t+1}),
    \end{eqnarray}
    with $F_{z}(\mu) = \E_{u\sim \mathcal{N}(\mu, \varepsilon^2 I)} [\langle z,\overline{u}\rangle^2]$ and $a^t = 4 \frac{L\gamma_x^t}{2}\sqrt{1 - \l(\frac{L\gamma_x^t}{2}\r)^2}$. We note that $\E\l[ C^{t}\r | \mathcal{F}^{t-1}] = F_{\overline{\nabla f(x^t)}}(\mu^{t})$. Then, we need to reduce $\mu^{t+1}$ to $\mu^t$ in the RHS of \eqref{eq:sk_1} to obtain a recursion on expected gradient alignment. 

    \textbf{(ii)} The core objective is to prove strict monotonic growth of expected gradient alignment $\E\l[ C^{t+1}\r | \mathcal{F}^t] $. Achieving this requires carefully isolating a positive contribution that dominates the drift $a^t$ induced by changes in $\nabla f(x^t)$.
    Then, to enable a quantitative analysis, we establish smoothness of the function $F_{z}(\mu)$ with respect to $\mu^t$ via a uniform bound on the Hessian norm. This yields an inequality of the form
    \begin{eqnarray}\label{eq:sk_2}
         F_{\overline{\nabla f(x^t)}}(\mu^{t+1}) \geq
         F_{\overline{\nabla f(x^t)}}(\mu^{t})
         +\l( \gamma^t_\mu - \frac{\tilde{L}{\gamma^t_\mu}^2}{2}\r)\| \nabla F_{\overline{\nabla f(x^t)}}(\mu^{t}) \|^2.
    \end{eqnarray}

    \textbf{(iii)} Combining the above relations \eqref{eq:sk_1} and \eqref{eq:sk_2} yields a recursive inequality for $\E\l[ C^{t+1}\r | \mathcal{F}^t]$. To ensure a positive increment, it suffices to lower bound the contribution of the last term in \eqref{eq:sk_2}.
    To this end, we decompose the expectation over $u$ in $F_{z}(\mu)$ definition into integrals over an inner region $u \leq \frac{\|\mu^t\|}{2\varepsilon}$ and its complement.
    
     The integral over the outer region is controlled by choosing $\varepsilon\lesssim \|\mu\|d^{-\frac{1}{2}}$, allowing us to apply Gaussian concentration inequalities for Lipschitz functions \citep{10.1093/acprof:oso/9780199535255.001.0001}.
    
    The inner-region contribution is analysed via a Taylor expansion with a Lagrange remainder. Each term is treated separately.
    First, symmetry considerations eliminate zero order term.
    Second, the second-order term is controlled using the previously established Hessian bound.
    Third, the Laurent-Massart inequalities \citep{laurent2000adaptive} serve as a core ingredient for extracting the required positive increment from the first order term.

    Putting these estimates together with \eqref{eq:sk_1}, \eqref{eq:sk_2} and balancing the competing effects yields the final bound.

    We present the complete proof below.

\begin{proof}[\textbf{Proof of Theorem \ref{th:main}}]

    The evolution of $\mathbb{E}_t[C^t]$ depends on the dynamics of two quantities: the gradient $\nabla f(x^t)$ itself and the mean vector $\mu^t$. We begin with \textbf{STEP I} by examining the contribution of the gradient dynamics.

    Using Assumption \ref{ass:lip} we bound the $\|\overline{\nabla f(x^{t+1})} - \overline{\nabla f(x^{t})}\|$ in the following way.

By the $x^{t+1}$ update rule $\|x^{t+1} - x^t\| = \|\gamma_x^t g_x^t\| = \gamma_x^t\l\|\overline{v^t} \l\langle\overline{v^t},{\nabla f(x^t)}\r\rangle\r\| \leq \gamma_x^t \|\nabla f(x^t)\|$. Therefore, $\|{\nabla f(x^{t+1})} - {\nabla f(x^{t})}\| \leq L\gamma_x^t \|{\nabla f(x^{t})}\|$, which implies that the angle between ${\nabla f(x^{t})}$ and ${\nabla f(x^{t+1})}$ less or equal then 
\[2\arcsin\l(\frac{\|{\nabla f(x^{t+1})} - {\nabla f(x^{t})}\|}{2\|{\nabla f(x^{t})}\|}\r) \leq 2\arcsin\l(\frac{L\gamma_x^t \|{\nabla f(x^{t})}\|}{2\|{\nabla f(x^{t})}\|}\r) = 2\arcsin\l(\frac{L\gamma_x^t}{2}\r).\]
The angle between $\overline{\nabla f(x^{t})}$ and $\overline{\nabla f(x^{t+1})}$ is the same. It justifies 
\begin{eqnarray}\label{eq:grad_dif_bound}
    \|\overline{\nabla f(x^{t+1})} - \overline{\nabla f(x^{t})}\| &\leq& 2 \sin\l( 2\arcsin\l( \frac{L\gamma_x^t}{2} \r) \r) \notag \\
    &=& 4 \frac{L\gamma_x^t}{2}\sqrt{1 - \l(\frac{L\gamma_x^t}{2}\r)^2} =: a^t 
\end{eqnarray}

And now we proceed with the definition of $C^t$.
\begin{eqnarray}\label{eq:C_begin}
    &&\E_{v^{t+1} \sim \mathcal{N}(\mu^{t+1},\varepsilon^2 I)}\l[ C^{t+1}\r | \mathcal{F}^t] \notag \\
    &&~~ = \frac{1}{(2\pi\varepsilon^2)^{d/2}} \int_{\mathbb{R}^d}
    \l\langle \overline{\nabla f(x^{t+1})}, \overline{u}\r\rangle^2 e^{- \frac{\|u - \mu^{t+1}\|^2}{2\varepsilon^2}} \mathrm{d}u \notag \\
    &&~~ = \frac{1}{(2\pi\varepsilon^2)^{d/2}} \int_{\mathbb{R}^d}
    \l\langle\overline{\nabla f(x^{t})} + (\overline{\nabla f(x^{t+1})} - \overline{\nabla f(x^{t})}), \overline{u}\r\rangle^2 e^{- \frac{\|u - \mu^{t+1}\|^2}{2\varepsilon^2}} \mathrm{d}u \notag \\
    &&~~= \frac{1}{(2\pi\varepsilon^2)^{d/2}} \int_{\mathbb{R}^d}
    \bigg\{\l\langle \overline{\nabla f(x^{t})}, \overline{u}\r\rangle^2
    + \l\langle\overline{\nabla f(x^{t+1})} - \overline{\nabla f(x^{t})}, \overline{u}\r\rangle^2 \notag \\
    &&~~~~~~~+~ 2\l\langle\overline{\nabla f(x^{t+1})} - \overline{\nabla f(x^{t})}, \overline{u}\r\rangle
    \l\langle\overline{\nabla f(x^{t})}, \overline{u}\r\rangle
    \bigg\} e^{- \frac{\|u - \mu^{t+1}\|^2}{2\varepsilon^2}} \mathrm{d}u \notag \\
    &&~~\geq \frac{1}{(2\pi\varepsilon^2)^{d/2}} \int_{\mathbb{R}^d}
    \bigg\{\l\langle \overline{\nabla f(x^{t})}, \overline{u}\r\rangle^2 \notag \\
    &&~~~~~~~+~ 2\l\langle \overline{\nabla f(x^{t+1})} - \overline{\nabla f(x^{t})}, \overline{u}\r\rangle
    \l\langle\overline{\nabla f(x^{t})}, \overline{u}\r\rangle
    \bigg\} e^{- \frac{\|u - \mu^{t+1}\|^2}{2\varepsilon^2}} \mathrm{d}u
\end{eqnarray}

Using Cauchy–Schwarz inequality and the bound \eqref{eq:grad_dif_bound} we derive
\begin{eqnarray}\label{eq:cross_bound}
    \l\langle\overline{\nabla f(x^{t+1})} - \overline{\nabla f(x^{t})}, \overline{u}\r\rangle
    \l\langle\overline{\nabla f(x^{t})}, \overline{u}\r\rangle
    \leq a^t.
\end{eqnarray}

Utilizing \eqref{eq:cross_bound}, we continue with derivation \eqref{eq:C_begin} by introducing a constraint on the integral of the last term with respect to the probability measure.
\begin{eqnarray}\label{eq:C_no_grad_dynamics}
    &&\E_{v^{t+1} \sim \mathcal{N}(\mu^{t+1},\varepsilon^2 I)}\l[ C^{t+1}\r | \mathcal{F}^t]  
    \geq -2a^t + \frac{1}{(2\pi\varepsilon^2)^{d/2}} 
    \int_{\mathbb{R}^d}
    \l\langle \overline{\nabla f(x^{t})}, \overline{u}\r\rangle^2 e^{- \frac{\|u - \mu^{t+1}\|^2}{2\varepsilon^2}} \mathrm{d}u. \notag \\
\end{eqnarray}

Thus, the gradient dynamics have been eliminated.

The next \textbf{STEP II} dedicated to the mean vector dynamics.

We start by introducing the function $F_{z}(\mu) = \E_{u\sim \mathcal{N}(\mu, \varepsilon^2 I)} [\langle z,\overline{u}\rangle^2]$ for every pair $\mu, z \in \mathbb{R}^d$. With this notation \eqref{eq:C_no_grad_dynamics} and the definition of $\E_{u \sim \mathcal{N}(\mu^{t+1},\varepsilon^2 I)}\l[ C^{t+1}\r | \mathcal{F}^t]$ can be reformulated as
\begin{eqnarray}\label{eq:C_F_dynamics}
    && \E_{v^{t+1} \sim \mathcal{N}(\mu^{t+1},\varepsilon^2 I)}\l[ C^{t+1}\r | \mathcal{F}^t]  
     \geq -2a^t + F_{\overline{\nabla f(x^t)}}(\mu^{t+1}),
\end{eqnarray}
\begin{eqnarray}\label{eq:C_F_def}
    && \E_{v^{t} \sim \mathcal{N}(\mu^{t},\varepsilon^2 I)}\l[ C^{t}\r | \mathcal{F}^{t-1}] = F_{\overline{\nabla f(x^t)}}(\mu^{t}).
\end{eqnarray}

Next, we observe that Lines \ref{line:g_v} and \ref{line:mu_update} of Algorithm \ref{alg:theor1} correspond to performing a step of gradient ascent with respect to $\mu$ on the function $F_{\overline{\nabla f(x^t)}}(\mu)$. Along with \eqref{eq:C_F_dynamics} and \eqref{eq:C_F_def} it defines the $\E_t\l[ C^t\r]$ dynamics.

Firstly, we introduce the kernel and change $C^t$ notation to reflect implicit dependence on arguments:
\begin{eqnarray}
    &&\varphi(x) = \frac{1}{(2\pi\varepsilon^2)^{d/2}} \exp\l(\frac{-\|x\|^2}{2\varepsilon^2}\r), \label{eq:phi_def} \\
    &&\psi_a(x) = (\overline{a},\overline{x})^2 = \frac{(a,x)^2}{\|x\|^2}.
\end{eqnarray}

Then we can reformulate $F_{\overline{\nabla f(x^t)}}(\mu)$ as follows
\begin{eqnarray}
    F_{a}(\mu) = \int_{\mathbb{R}^d} \psi_a (u) \varphi (\mu - u) \mathrm{d} u.
\end{eqnarray}

After that we start the Hessian computation using the general Stokes’ theorem. 
\begin{eqnarray}
     \nabla_\mu F_{a}(\mu) &=& \int_{\mathbb{R}^d} \psi_a (u) \nabla_\mu \varphi (\mu - u) \mathrm{d} u \notag\\
     &=& \int_{\mathbb{R}^d}  \psi_a (u) (-\nabla_u\varphi (\mu - u)) \mathrm{d} u \notag\\ 
    &=& - \int_{\mathbb{R}^d} \nabla_u (\psi_a (u) \varphi (\mu - u)) \mathrm{d} u + \int_{\mathbb{R}^d} \nabla_u (\psi_a (u)) \varphi (\mu - u) \mathrm{d} u \notag\\
    &=& \int_{\mathbb{R}^d} \nabla_u (\psi_a (u)) \varphi (\mu - u) \mathrm{d} u.
\end{eqnarray}

Following same reasoning we can derive
\begin{eqnarray}\label{eq:hess_reformulation_psi}
    \nabla_\mu^2 F_{a}(\mu) = \int_{\mathbb{R}^d} \nabla_u^2 \psi_a (u) \varphi (\mu - u) \mathrm{d} u = \E_{u\sim \mathcal{N}(\mu, \varepsilon^2 I)} \l[\nabla_u^2 \psi_a (u) \r].
\end{eqnarray}

Straightforward computation provides
\begin{eqnarray}
    \nabla_u^2 \psi_a(u) &=& \frac{1}{\|u\|^8} \big( 
    \|u\|^4 (2aa^\top \|u\|^2 +4au^\top\langle a,u\rangle - 2\langle a,u\rangle^2I \notag \\
     &&- 4ua^\top\langle a,u\rangle)
    - 4\|u\|^2 \l(2a\|u\|^2\langle a,u\rangle - 2\langle a,u\rangle^2 u\r)u^\top 
    \big).
\end{eqnarray}

It implies for arbitrary $a$ so that $\|a\|=1$ the following bound holds.

\begin{eqnarray}\label{eq:psi_hess_norm_bound}
    \|\nabla_u^2 \psi_a(u)\| \leq 20\frac{\|a\|^2}{\|u\|^2} = \frac{20}{\|u\|^2}.
\end{eqnarray}

Using \eqref{eq:hess_reformulation_psi}, \eqref{eq:psi_hess_norm_bound} and convexity of $\|\cdot\|$ we reach the bound 
\begin{eqnarray}\label{eq:hess_bound}
    \|\nabla_\mu^2 F_{a}(\mu)\| \leq 20 \E_{u\sim \mathcal{N}(\mu, \varepsilon^2 I)} \l[ \frac{1}{\|u\|^2} \r].
\end{eqnarray}

Assumption $d>3$ ensures the convergence of the integral in \eqref{eq:hess_bound} and provides the bound of the Hessian norm, thereby asserting the $\tilde{L}$-smoothness of the objective function $F_{a}(\mu)$ with $\tilde{L} \leq A \E_{u\sim \mathcal{N}(\mu, \varepsilon^2 I)} \l[ \frac{1}{\|u\|^2} \r]$. 

\begin{eqnarray}
    \E_{u\sim \mathcal{N}(\mu^t, \varepsilon^2 I)} \l[ \frac{1}{\|u\|^2} \r] 
    &=&
    \frac{1}{(2\pi\varepsilon^2)^{\frac{d}{2}}} \int_{\mathbb{R}^d}\frac{1}{\|u\|^2}e^{- \frac{\|u-\mu^t\|^2}{2\varepsilon^2}} \mathrm{d}u \notag \\
    &=&
    \frac{1}{(2\pi\varepsilon^2)^{\frac{d}{2}}} 
    \left[
    \int_{\|u\| \leq \frac{\|\mu^t\|}{4}}\frac{1}{\|u\|^2}e^{- \frac{\|u-\mu^t\|^2}{2\varepsilon^2}}\mathrm{d}u
    +
    \int_{\|u\| > \frac{\|\mu^t\|}{4}}\frac{1}{\|u\|^2}e^{- \frac{\|u-\mu^t\|^2}{2\varepsilon^2}}\mathrm{d}u
    \right]\notag \\
    &\leq&
    \frac{1}{(2\pi\varepsilon^2)^{\frac{d}{2}}} 
    \int_{\|u\| \leq \frac{\|\mu^t\|}{4}}\frac{1}{\|u\|^2}e^{- \frac{\|u\|^2+\|\mu^t\|^2 - 2\langle u,\mu^t \rangle}{2\varepsilon^2}}\mathrm{d}u
    +
    \frac{16}{\|\mu^t\|^2}
    \notag \\
    &\leq&
    \frac{1}{(2\pi\varepsilon^2)^{\frac{d}{2}}} 
    \int_{\|u\| \leq \frac{\|\mu^t\|}{4}}\frac{1}{\|u\|^2}e^{- \frac{\|u\|^2+\frac{\|\mu^t\|^2}{2}}{2\varepsilon^2}}\mathrm{d}u
    +
    \frac{16}{\|\mu^t\|^2}
    \notag \\
    &\leq&
    e^{-\frac{\|\mu^t\|^2}{4\varepsilon^2}}
    \frac{1}{(2\pi\varepsilon^2)^{\frac{d}{2}}} 
    \int_{\mathbb{R}^d}\frac{1}{\|u\|^2}e^{- \frac{\|u\|^2}{2\varepsilon^2}}\mathrm{d}u
    +
    \frac{16}{\|\mu^t\|^2} \notag\\
    &=&
    \frac{e^{-\frac{\|\mu^t\|^2}{4\varepsilon^2}}}{\varepsilon^2 (d-2)}
    +
    \frac{16}{\|\mu^t\|^2}.
    \notag 
\end{eqnarray}

In the last transition we used classical result for $\chi^2$ distribution $r$ moment $2^r\frac{\Gamma\l(\frac{d}{2}+r\r)}{\Gamma\l( \frac{d}{2}\r)}$. Then, $\tilde{L}\leq \frac{20e^{-\frac{\|\mu^t\|^2}{4\varepsilon^2}}}{\varepsilon^2 (d-2)} + \frac{16 \cdot 20}{\|\mu^t\|^2}$.

Applying Lemma \ref{lem:L_asc} to function $F$ taking into account implicit step formulation $\mu^{t+1} = \mu^{t} + \gamma^t_\mu\nabla F_{\overline{\nabla f(x^t)}}(\mu^t)$ we get 
\begin{eqnarray}\label{eq:F_dynamics_general}
    F_{\overline{\nabla f(x^t)}}(\mu^{t+1}) &\geq&  F_{\overline{\nabla f(x^t)}}(\mu^{t}) +  ( \nabla F_{\overline{\nabla f(x^t)}}(\mu^{t}), \mu^{t+1} - \mu^{t} ) - \frac{\tilde{L}}{2} \| \mu^{t+1} - \mu^{t} \|^2 \notag \\
    &=& F_{\overline{\nabla f(x^t)}}(\mu^{t}) +\l( \gamma^t_\mu - \frac{\tilde{L}{\gamma^t_\mu}^2}{2}\r)\| \nabla F_{\overline{\nabla f(x^t)}}(\mu^{t}) \|^2
\end{eqnarray}

By combining \eqref{eq:C_F_dynamics}, \eqref{eq:C_F_def} and \eqref{eq:F_dynamics_general} we derive

\begin{eqnarray}\label{eq:C_F_pre_full_dynamics}
     \E_{v^{t+1} \sim \mathcal{N}(\mu^{t+1},\varepsilon^2 I)}\l[ C^{t+1}\r | \mathcal{F}^t]  
     &\geq& -2a^t + F_{\overline{\nabla f(x^t)}}(\mu^{t}) +\l( \gamma^t_\mu - \frac{\tilde{L}{\gamma^t_\mu}^2}{2}\r)\| \nabla F_{\overline{\nabla f(x^t)}}(\mu^{t}) \|^2 \notag \\
     &\overset{\eqref{eq:C_F_def}}{=}& \E_{v^{t} \sim \mathcal{N}(\mu^{t},\varepsilon^2 I)}\l[ C^{t}\r | \mathcal{F}^{t-1}]   \notag \\
     &&-~ 2a^t +\l( \gamma^t_\mu - \frac{\tilde{L}{\gamma^t_\mu}^2}{2}\r)\| \nabla F_{\overline{\nabla f(x^t)}}(\mu^{t}) \|^2
\end{eqnarray}

At the next step we work on the lower bound of $\|\nabla F_a(\mu)\|$ in particular $\mu$-space regions.
\begin{eqnarray} \label{eq:2_int_decomposition}
    \nabla F_a(\mu) &=&  \frac{1}{(2\pi\varepsilon^2)^{d/2}} \int_{\mathbb{R}^d}
    \frac{\langle a,  u\rangle^2}{\|u\|^2} \frac{u-\mu}{\varepsilon^2} 
    e^{- \frac{\|u - \mu\|^2}{2\varepsilon^2}} \mathrm{d}u \notag \\
    &=& \frac{1}{\varepsilon(2\pi)^{d/2}} \int_{\mathbb{R}^d}
    \frac{\langle a,  \varepsilon v +\mu\rangle^2}{\|\varepsilon v +\mu\|^2} v 
    e^{- \frac{\|v\|^2}{2}} \mathrm{d}v \notag \\
    &=& \frac{1}{\varepsilon(2\pi)^{d/2}} \int_{\|v\|<{\frac{\|\mu\|}{2\varepsilon}}}
    \frac{\langle a,  \varepsilon v +\mu\rangle^2}{\|\varepsilon v +\mu\|^2} v 
    e^{- \frac{\|v\|^2}{2}} \mathrm{d}v\notag \\
    &&+~ \frac{1}{\varepsilon(2\pi)^{d/2}} \int_{\|v\|>{\frac{\|\mu\|}{2\varepsilon}}}
    \frac{\langle a,  \varepsilon v +\mu\rangle^2}{\|\varepsilon v +\mu\|^2} v 
    e^{- \frac{\|v\|^2}{2}} \mathrm{d}v.
\end{eqnarray}

We proceed with evaluation of the second term of \ref{eq:2_int_decomposition}.
\begin{eqnarray}\label{eq:concentraton_begining}
    && \l\|\frac{1}{(2\pi)^{d/2}} \int_{\|v\|>{\frac{\|\mu\|}{2\varepsilon}}}
    \frac{\langle a,  \varepsilon v +\mu\rangle^2}{\|\varepsilon v +\mu\|^2} v 
    e^{- \frac{\|v\|^2}{2}} \mathrm{d}v \r\| \notag \\
    && \leq \frac{1}{(2\pi)^{d/2}} \int_{\|v\|>{\frac{\|\mu\|}{2\varepsilon}}}
    \frac{\langle a,  \varepsilon v +\mu\rangle^2}{\|\varepsilon v +\mu\|^2} \|v\| 
    e^{- \frac{\|v\|^2}{2}} \mathrm{d}v  \notag \\
    && \leq \frac{1}{(2\pi)^{d/2}} \int_{\|v\|>{\frac{\|\mu\|}{2\varepsilon}}}
     \|v\| 
    e^{- \frac{\|v\|^2}{2}} \mathrm{d}v 
\end{eqnarray}

Then we use the fact that for $d$-dimensional $\xi \sim \mathcal{N}(0,I)$ holds the further bound \citep{chandrasekaran2012convex}:
\begin{eqnarray}\label{eq:norm_expectation_bound}
    \E\l[ \|\xi\|\r] = \sqrt{2}\frac{\Gamma\l( \frac{d+1}{2}\r)}{\Gamma\l( \frac{d}{2}\r)} \leq \sqrt{d}.
\end{eqnarray}

After that we use Cauchy–Schwarz inequality in $L_2(\mathbb{R}^d)$ space with Gaussian measure to derive
\begin{eqnarray}\label{eq:CS_L2}
    \E\l[ I_{\l\{\|\xi\|>t\r\}}
     \|\xi\| \r]
    &\leq& \l( \E\l[ \|\xi\|^2\r] \r)^{\frac{1}{2}} \l( \E\l[  I_{\l\{\|\xi\|>t\r\}}^2\r] \r)^{\frac{1}{2}} \notag \\
    &=& \sqrt{d} \l( \Pr\l( \|\xi\|>t\r) \r)^{\frac{1}{2}}.
\end{eqnarray}

Gaussian measure concentration inequality \citep{10.1093/acprof:oso/9780199535255.001.0001} for $L =1$ Lipchitz function $\|\cdot\|$ yields
\begin{eqnarray}
    \Pr\l( \|\xi\| -\E[\|\xi\|] > t\r) \leq  e^{-\frac{t^2}{2}}. \notag
\end{eqnarray}

Combining it with \eqref{eq:norm_expectation_bound} we get $\Pr\l( \|\xi\| -\sqrt d > t\r) \leq  e^{-\frac{t^2}{2}}$ or
\begin{eqnarray}\label{eq:measure_bound}
    \Pr\l( \|\xi\| > \tau\r) \leq  e^{-\frac{(\tau-\sqrt{d})^2}{2}}, ~ \tau>\sqrt{d}. 
\end{eqnarray}

Substitution of \eqref{eq:measure_bound} into \eqref{eq:CS_L2} provides
\begin{eqnarray}\label{eq:final_measure_bound}
    \E\l[ I_{\l\{\|\xi\|>\tau\r\}}
     \|\xi\| \r]
    &=& \sqrt{d} e^{-\frac{(\tau - \sqrt d)^2}{4}}, ~ \tau > \sqrt d.
\end{eqnarray}

Finally, using \eqref{eq:final_measure_bound} we transform \eqref{eq:concentraton_begining} into
\begin{eqnarray}\label{eq:outball_bound}
     \l\|\frac{1}{\varepsilon(2\pi)^{d/2}} \int_{\|v\|>{\frac{\|\mu\|}{2\varepsilon}}}
    \frac{\langle a,  \varepsilon v +\mu\rangle^2}{\|\varepsilon v +\mu\|^2} v 
    e^{- \frac{v^2}{2}} \mathrm{d}v \r\| 
    \leq \frac{1}{\varepsilon}
    \sqrt{d} e^{-\frac{\l(\frac{\|\mu\|}{2\varepsilon} - \sqrt d \r)^2}{4}}, ~ \frac{\|\mu\|}{2\varepsilon} > \sqrt d.
\end{eqnarray}

Then we turn to the evaluation of the first term in \eqref{eq:2_int_decomposition}.

\begin{eqnarray}\label{eq:for_tailor_preparation}
    &&\frac{1}{\varepsilon(2\pi)^{d/2}} \int_{\|v\|<{\frac{\|\mu\|}{2\varepsilon}}}
    \frac{\langle a,  \varepsilon v +\mu\rangle^2}{\|\varepsilon v +\mu\|^2} v 
    e^{- \frac{\|v\|^2}{2}} \mathrm{d}v \notag\\
    &&= \frac{1}{\varepsilon(2\pi)^{d/2}} \int_{\|v\|<{\frac{\|\mu\|}{2\varepsilon}}}
    \frac{\langle a,  \varepsilon v +\mu\rangle^2}{\|\varepsilon v +\mu\|^2} v 
    e^{- \frac{\|v\|^2}{2}} \mathrm{d}v \notag \\
    &&= \frac{1}{\varepsilon(2\pi)^{d/2}} \int_{\|v\|<{\frac{\|\mu\|}{2\varepsilon}}}
    \psi(\varepsilon v +\mu) v 
    e^{- \frac{\|v\|^2}{2}} \mathrm{d}v 
\end{eqnarray}

We proceed by using the Taylor expansion with the remainder term in Lagrange form.

\begin{eqnarray}\label{eq:tailor}
    \psi_a(\varepsilon v +\mu) v  = \psi_a(\mu)v + \langle \nabla \psi_a(\mu), \varepsilon v \rangle v + \frac{1}{2}\l[\l(\varepsilon v^\top\r) H_\psi(\mu + \varepsilon v') \l(\varepsilon v\r)\r]v,
\end{eqnarray}
for $v' = \alpha v, 0\leq \alpha \leq 1$.

Symmetry considerations provides first term vanishing after integration over $\l\{ \|v\|<{\frac{\|\mu\|}{2\varepsilon}} \r\}$. Symmetry also simplifies the second term of \eqref{eq:tailor} the in following way

\begin{eqnarray}\label{tailor:second_term_begining}
    &&\frac{1}{\varepsilon(2\pi)^{d/2}}\l( \int_{\|v\|<{\frac{\|\mu\|}{2\varepsilon}}} \langle \nabla\psi_a(\mu),\varepsilon v\rangle v e^{- \frac{\|v\|^2}{2}} \mathrm{d}v \r)_i \notag \\
    &&=  \frac{1}{(2\pi)^{d/2}}\int_{\|v\|<{\frac{\|\mu\|}{2\varepsilon}}} \sum_{j=1}^d \nabla\psi_a(\mu)_j v_j v_i e^{- \frac{\|v\|^2}{2}} \mathrm{d}v \notag \\
    &&=  \frac{1}{(2\pi)^{d/2}}\int_{\|v\|<{\frac{\|\mu\|}{2\varepsilon}}} \nabla\psi_a(\mu)_i v_i v_i e^{- \frac{\|v\|^2}{2}} \mathrm{d}v \notag \\
    &&= \frac{\nabla\psi_a(\mu)_i}{(2\pi)^{d/2}} \int_{\|v\|<{\frac{\|\mu\|}{2\varepsilon}}}  v_1^2 e^{- \frac{\|v\|^2}{2}} \mathrm{d}v \notag \\
    &&= \frac{\nabla\psi_a(\mu)_i}{(2\pi)^{d/2}} \frac{1}{d} \int_{\|v\|<{\frac{\|\mu\|}{2\varepsilon}}}  \|v\|^2 e^{- \frac{\|v\|^2}{2}} \mathrm{d}v .
\end{eqnarray}

We apply Laurent-Massart inequalities \citep{laurent2000adaptive} to construct the lower bound of the integral in \eqref{tailor:second_term_begining}.
\begin{eqnarray}\label{eq:Laurent-Massart}
    &&\Pr\l( \|v\|^2 -d \leq 2\sqrt{dx} + 2x \r) \leq e^{-x}, \notag \\
    &&\Pr\l( d - \|v\|^2 \geq 2\sqrt{dx}\r) \leq e^{-x}.
\end{eqnarray}

Assuming $x=1$ and $\frac{\|\mu\|}{2\varepsilon} \geq \sqrt{d + 2\sqrt{d} + 2}$ we get

\begin{eqnarray}\label{eq:Laurent-Massart_application}
    && \frac{1}{(2\pi)^{d/2}} \int_{\|v\|<{\frac{\|\mu\|}{2\varepsilon}}}  \|v\|^2 e^{- \frac{\|v\|^2}{2}} \mathrm{d}v \notag \\
    && \geq \frac{1}{(2\pi)^{d/2}} \int_{
    d - 2\sqrt{d}<
    \|v\|^2
    <d + 2\sqrt{d} + 2}
    \|v\|^2 e^{- \frac{\|v\|^2}{2}} \mathrm{d}v \notag \\
    && \geq (d - 2\sqrt{d}) \Pr\l(d - 2\sqrt{d}<
    \|v\|^2
    <d + 2\sqrt{d} + 2\r) \notag \\
    && \geq (d - 2\sqrt{d})(1-2e^{-1})\notag \\
    && \geq \frac{(d - 2\sqrt{d})}{4}
\end{eqnarray}

Combining \eqref{eq:Laurent-Massart_application} and \eqref{tailor:second_term_begining} we derive

\begin{eqnarray}\label{eq:positive_term_final_estim}
    \l\| \frac{1}{(2\pi)^{d/2}} \int_{\|v\|<{\frac{\|\mu\|}{2\varepsilon}}} \langle \nabla\psi_a(\mu),v\rangle v e^{- \frac{v^2}{2}} \mathrm{d}v \r\| 
    &\geq& \frac{\|\nabla\psi_a(\mu)\|}{ d} \frac{(d-2\sqrt d)}{4} \notag \\ 
    &=& \|\nabla\psi_a(\mu)\| \l( \frac{1}{4} - \frac{1}{2\sqrt d} \r) \notag \\
    &\geq& \frac{\|\nabla\psi_a(\mu)\|}{8}
\end{eqnarray}

To justify the last transition we narrow our analysis to the case $d \geq 16$.

Then we examine the last term of Taylor Expansion \eqref{tailor:second_term_begining} using \eqref{eq:psi_hess_norm_bound}.

\begin{eqnarray}
     \l\|\frac{1}{2}\l[\l(\varepsilon v^\top\r) H_\psi(\mu + \varepsilon v')  \l(\varepsilon v\r)\r]v \r\| 
     &\leq& \frac{\varepsilon^2}{2} \l \| v  \r\| ^3 \l \| H_\psi(\mu + \varepsilon v')
     \r\|  \notag \\
     &\overset{\eqref{eq:psi_hess_norm_bound}}{\leq}&  \frac{\varepsilon^2}{2} \l \| v  \r\| ^3 \frac{20}{\|\mu + \varepsilon v'\|^2}.
       \notag
\end{eqnarray}

Using norm convexity we conclude
\begin{eqnarray}\label{eq:hess_int_bound_d3}
    &&\l\| \frac{1}{\varepsilon(2\pi)^{d/2}} \int_{\|v\|<{\frac{\|\mu\|}{2\varepsilon}}} \frac{1}{2}\l[\l(\varepsilon v^\top\r) H_\psi(\mu + \varepsilon v')  \l(\varepsilon v\r)\r]v e^{- \frac{\|v\|^2}{2}}\mathrm{d}v \r\| \notag \\
    &&\leq  \frac{1}{\varepsilon(2\pi)^{d/2}} \int_{\|v\|<{\frac{\|\mu\|}{2\varepsilon}}} \l\| \frac{1}{2}\l[\l(\varepsilon v^\top\r) H_\psi(\mu + \varepsilon v')  \l(\varepsilon v\r)\r]v\r\| e^{- \frac{\|v\|^2}{2}}\mathrm{d}v  \notag \\
    &&\leq  \frac{1}{\varepsilon(2\pi)^{d/2}} \int_{\|v\|<{\frac{\|\mu\|}{2\varepsilon}}} \frac{\varepsilon^2}{2} \l \| v  \r\| ^3 \|\frac{20}{\|\mu + \varepsilon v'\|^2} e^{- \frac{\|v\|^2}{2}}\mathrm{d}v  \notag \\
    &&\leq  \frac{1}{\varepsilon(2\pi)^{d/2}} \int_{\|v\|<{\frac{\|\mu\|}{2\varepsilon}}} \frac{\varepsilon^2}{2} \l \| v  \r\| ^3 \|\frac{80}{\|\mu\|^2} e^{- \frac{\|v\|^2}{2}}\mathrm{d}v  \notag \\
    &&\leq  \frac{40\varepsilon}{\|\mu\|^2(2\pi)^{d/2}} \int_{\|v\|<{\frac{\|\mu\|}{2\varepsilon}}}   \l \| v  \r\| ^3e^{- \frac{\|v\|^2}{2}} \mathrm{d}v  \notag \\
    &&\leq  \frac{40\varepsilon}{\|\mu\|^2(2\pi)^{d/2}} \int_{\mathbb{R}^d}   \l \| v  \r\| ^3e^{- \frac{\|v\|^2}{2}} \mathrm{d}v  \notag \\
    &&\leq  \frac{40\varepsilon}{\|\mu\|^2} d^{\frac{3}{2}}.
\end{eqnarray}

The last transition utilizes Cauchy-Schwarz inequality and already mentioned values of $\E[\|v\|] \leq \sqrt{d}$ and $\E[\|v\|^2] = d$. Finally, using triangle inequality as $\|a+b\| \geq \|a\|-\|b\|$ we combine \eqref{eq:2_int_decomposition}, \eqref{eq:outball_bound}, \eqref{eq:positive_term_final_estim} and \eqref{eq:hess_int_bound_d3} to reach

\begin{eqnarray}\label{eq:final_grad_lower_bound}
    \|\nabla F_a(\mu)\| &\geq& 
    \frac{\|\nabla\psi_a(\mu)\|}{8} - \frac{40\varepsilon}{\|\mu\|^2} d^{\frac{3}{2}}
    -\frac{\sqrt{d}}{\varepsilon}e^{-\frac{\l( \frac{\|\mu\|}{2 \varepsilon} - \sqrt{d}\r)^2}{4}}.
\end{eqnarray}

After that we can continue the recursion  \eqref{eq:C_F_pre_full_dynamics} applying \eqref{eq:final_grad_lower_bound}.

\begin{eqnarray}\label{eq:C_F_full_dynamics}
     \E_{v^{t+1} \sim \mathcal{N}(\mu^{t+1},\varepsilon I)}\l[ C^{t+1}\r | \mathcal{F}^t]
     &\geq&
     \E_{v^{t} \sim \mathcal{N}(\mu^{t},\varepsilon I)}\l[ C^{t}\r | \mathcal{F}^{t-1}]  - 2a^t\notag \\
     &&+
     \l( \gamma^t_\mu - \frac{\tilde{L}{\gamma^t_\mu}^2}{2}\r)
     \l( \frac{\|\nabla\psi_{\overline{\nabla f(x^t)}}(\mu^t)\|}{8} - \frac{40\varepsilon}{\|\mu^t\|^2} d^{\frac{3}{2}}
    -\frac{\sqrt{d}}{\varepsilon}e^{-\frac{\l( \frac{\|\mu^t\|}{2 \varepsilon} - \sqrt{d}\r)^2}{4}} \r)^2.
\end{eqnarray}

We beginning the evaluation of \eqref{eq:C_F_full_dynamics} by requiring

\begin{eqnarray}\label{cases}
    \begin{cases}
        & \frac{\|\nabla\psi_{\overline{\nabla f(x^t)}}(\mu^t)\|}{24} \geq \frac{40\varepsilon}{\|\mu^t\|^2} d^{\frac{3}{2}}, \\
        & \frac{\|\nabla\psi_{\overline{\nabla f(x^t)}}(\mu^t)\|}{24} \geq\frac{\sqrt{d}}{\varepsilon}e^{-\frac{\l( \frac{\|\mu^t\|}{2 \varepsilon} - \sqrt{d}\r)^2}{4}}, \\
        & \gamma^t_\mu \geq \tilde{L}{\gamma^t_\mu}^2.
    \end{cases}
\end{eqnarray}

To start the evaluation of conditions \eqref{cases} we introduce the angle  $\beta^t$ between the $\mu^t$ and $\nabla f (x^t)$.

\begin{eqnarray}\label{eq:angle_psi}
    \| \nabla \psi_{\overline{\nabla f(x^t)}}(\mu)\| &=& \l\| \frac{2\langle \nabla f(x^t),\mu\rangle}{\mu^4} \{\nabla f(x^t)\|\mu\|^2 - \mu \langle \nabla f(x^t),\mu\rangle \}\r\| \notag \\
    &&\geq \frac{2}{\|\mu\|^3}\cos(\beta^t)\l( \|\mu\|^2 - \|\mu\|^2\cos(\beta^t)\r) \notag \\
    &&\geq \frac{2}{\|\mu\|} \cos(\beta^t)(1-\cos(\beta^t)) \notag \\
\end{eqnarray}
Hence, choosing $\varepsilon \leq \frac{d^{-\frac{3}{2}}  \cos(\beta^t)(1-\cos(\beta^t)) \|\mu^t\|}{480}$ we satisfy the first condition. This choice of $\varepsilon$ also justifies second restriction in \eqref{cases}.
To satisfy the last inequality of \eqref{cases} we choose $\gamma_\mu^t \leq \frac{\|\mu^t\|^2}{640} {\leq} \frac{1}{\tilde{L}}$.
Then, \eqref{eq:C_F_full_dynamics} transforms into

\begin{eqnarray}\label{eq:C_F_full_dynamics1}
     \E_{v^{t+1} \sim \mathcal{N}(\mu^{t+1},\varepsilon I)}\l[ C^{t+1}\r | \mathcal{F}^t]
     &\geq&
     \E_{v^{t} \sim \mathcal{N}(\mu^{t},\varepsilon I)}\l[ C^{t}\r | \mathcal{F}^{t-1}]  - 2a^t
     +
     \frac{\gamma^t_\mu}{2}
      \frac{\|\nabla\psi(\mu^t)\|^2}{24^2} \notag\\
      &=&
      \E_{v^{t} \sim \mathcal{N}(\mu^{t},\varepsilon I)}\l[ C^{t}\r | \mathcal{F}^{t-1}]  - 2a^t
     +\gamma^t_\mu
     \frac{2\cos^2(\beta^t)(1-\cos(\beta^t))^2}{24^2 \|\mu^t\|^2}.
\end{eqnarray}

After that we require $4a^t \leq \frac{\cos^2(\beta^t)(1-\cos(\beta^t))^2}{3\cdot2^{9} \cdot 5^2}$. Recalling definition of $a^t = 4 \frac{L\gamma_x^t}{2}\sqrt{1 - \l(\frac{L\gamma_x^t}{2}\r)^2}$ we pick $\gamma_x^t \leq \frac{\cos^2(\beta^t)(1-\cos(\beta^t))^2}{3\cdot2^{12} \cdot 5^2L}$ to satisfy Stated restriction. Finally we reach

\begin{eqnarray}\label{eq:main_th_final}
     \E_{v^{t+1} \sim \mathcal{N}(\mu^{t+1},\varepsilon I)}\l[ C^{t+1}\r | \mathcal{F}^t]
      &=&
      \E_{v^{t} \sim \mathcal{N}(\mu^{t},\varepsilon I)}\l[ C^{t}\r | \mathcal{F}^{t-1}]
     +\gamma^t_\mu
     \l(\frac{\cos(\beta^t)(1-\cos(\beta^t))}{24 \|\mu^t\|}\r)^2.
\end{eqnarray}

\end{proof}

\begin{corollary}
    In the setting of Theorem \ref{th:main} for iterations of the Algorithm \ref{alg:theor1} with $\varepsilon = \frac{d^{-\frac{3}{2}}  \cos(\beta^t)(1-\cos(\beta^t)) M}{480}, \gamma_\mu^t = \frac{\|\mu^t\|^2}{640},\gamma_x^t = \frac{\cos^2(\beta^t)(1-\cos(\beta^t))^2}{3\cdot2^{12} \cdot 5^2 L}$ the following inequality holds:
\begin{eqnarray}\label{eq:main_th_with_eq}
     \E\l[ C^{t+1}\r | \mathcal{F}^t]
      &\geq&
      \E\l[ C^{t}\r | \mathcal{F}^{t-1}]
     +
     \frac{\cos^2(\beta^t)(1-\cos(\beta^t))^2}{3\cdot2^{10} \cdot 5^2}.
\end{eqnarray}
If additionally $\delta \leq \cos(\beta^t)\leq 1-\delta$ with $\delta \in \l(0;\frac{1}{2}\r)$ and $\varepsilon = \frac{d^{-\frac{3}{2}}  \delta M}{960},\gamma_x^t = \frac{\delta^2}{3\cdot2^{14} \cdot 5^2 L}$, then
\begin{eqnarray}\label{eq:main_th_with_delta}
     \E\l[ C^{t+1}\r | \mathcal{F}^t]
      &\geq&
      \E\l[ C^{t}\r | \mathcal{F}^{t-1}]
     +
     \frac{\delta^2}{3\cdot2^{12} \cdot 5^2}.
\end{eqnarray}
\end{corollary}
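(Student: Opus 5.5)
The corollary follows by substituting the stated parameter choices into the final inequality \eqref{eq:main_th_final} of Theorem \ref{th:main}, more precisely into its intermediate form \eqref{eq:C_F_full_dynamics1}, which still carries the drift $-2a^t$. That drift is controlled by the step-size condition on $\gamma_x^t$. The plan is first to check that the choices $\varepsilon$, $\gamma_\mu^t$, $\gamma_x^t$ satisfy the hypotheses of Theorem \ref{th:main}. For $\varepsilon$, we use $M\le\|\mu^t\|$, where $M$ is read as the lower bound $\min_t\|\mu^t\|$, as in Lemma \ref{lem:parallel}. The step sizes $\gamma_\mu^t$ and $\gamma_x^t$ are taken at the boundary values. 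Then the three conditions in \eqref{cases} hold, and \eqref{eq:C_F_full_dynamics1} gives
\[
\E\l[C^{t+1}|\mathcal{F}^t\r]\ \ge\ \E\l[C^{t}|\mathcal{F}^{t-1}\r]-2a^t+\gamma_\mu^t\frac{2\cos^2(\beta^t)(1-\cos(\beta^t))^2}{24^2\|\mu^t\|^2}.
\]

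Next, substitute $\gamma_\mu^t=\|\mu^t\|^2/640$. The factor $\|\mu^t\|^2$ cancels, so the increment becomes the explicit constant $\frac{2}{640\cdot 576}\cos^2(\beta^t)(1-\cos(\beta^t))^2$. This cancellation is the point of choosing $\gamma_\mu^t$ proportional to $\|\mu^t\|^2$. For the drift, use $a^t\le 2L\gamma_x^t$, which follows from $\sqrt{1-(L\gamma_x^t/2)^2}\le 1$. With the chosen $\gamma_x^t$ this gives $2a^t\le 4L\gamma_x^t=\frac{\cos^2(\beta^t)(1-\cos(\beta^t))^2}{3\cdot 2^{10}\cdot 5^2}$. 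Since $640\cdot576=3\cdot2^{13}\cdot 3\cdot 5$, the positive term equals $\frac{\cos^2(\beta^t)(1-\cos(\beta^t))^2}{3^2\cdot 2^{12}\cdot 5}$.

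The main obstacle is this constant bookkeeping: whether the positive term strictly exceeds the drift by the claimed margin. With the constants just computed, the difference is not automatically $\frac{\cos^2(1-\cos)^2}{3\cdot2^{10}\cdot5^2}$. If it fails, I would tighten the drift estimate by using the smaller $\gamma_x^t$ constant from the theorem, $3\cdot2^{12}\cdot5^2$. That gives $2a^t\le \frac{\cos^2(1-\cos)^2}{3\cdot 2^{10}\cdot5^2}$, and the margin is then obtained by subtracting. The result is an inequality of the form \eqref{eq:main_th_with_eq}, with a constant that may need adjustment by a numerical factor.

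For the second part, assume $\delta\le\cos(\beta^t)\le1-\delta$ with $\delta<\frac12$. Then $\cos(\beta^t)(1-\cos(\beta^t))\ge\delta(1-\delta)\ge\delta/2$, since $t(1-t)$ on $[\delta,1-\delta]$ is minimized at the endpoints. Consequently $\varepsilon=d^{-3/2}\delta M/960$ is at most the $\varepsilon$ required by the theorem, and $\gamma_x^t=\delta^2/(3\cdot2^{14}\cdot5^2L)$ is at most the required $\gamma_x^t$. So the first part applies. Its increment is bounded below by $\frac{(\delta/2)^2}{3\cdot2^{10}\cdot5^2}=\frac{\delta^2}{3\cdot2^{12}\cdot5^2}$, which is \eqref{eq:main_th_with_delta}. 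The only care needed is that the monotonicity statements use the lower bound $\delta(1-\delta)\ge\delta/2$ and not the exact value of $\cos(\beta^t)$.
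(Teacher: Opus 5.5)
The step you flag as the main obstacle is a genuine failure, and the repair you propose does not address it. Write $c_t=\cos^2(\beta^t)(1-\cos(\beta^t))^2$.

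\begin{itemize}
\item Your own computation gives, from \eqref{eq:C_F_full_dynamics1} with $\gamma_\mu^t=\|\mu^t\|^2/640$, the positive term $\frac{c_t}{3^2\cdot 2^{12}\cdot 5}$.
\item The prescribed $\gamma_x^t$ only gives $2a^t\le 4L\gamma_x^t=\frac{c_t}{3\cdot 2^{10}\cdot 5^2}$, which is $\frac{12}{5}$ times larger.
\item Subtracting leaves the lower bound $-\frac{7c_t}{921600}<0$. You obtain neither the claimed increment nor even monotonicity.
\end{itemize}

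Your fallback, using the constant $3\cdot 2^{12}\cdot 5^2$, is exactly the constant in the corollary's $\gamma_x^t$ that you already used, so it changes nothing. Ending with \emph{a constant that may need adjustment} leaves \eqref{eq:main_th_with_eq} unproved.

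More fundamentally, the claimed increment $\frac{c_t}{3\cdot 2^{10}\cdot 5^2}$ is itself $\frac{12}{5}$ times the entire positive term. So no sharper handling of the drift can rescue this chain of estimates; it fails even for $\gamma_x^t=0$.

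The second part inherits the same defect. With $\gamma_x^t=\frac{\delta^2}{3\cdot 2^{14}\cdot 5^2L}$, the drift bound $\frac{\delta^2}{3\cdot 2^{12}\cdot 5^2}$ again exceeds the guaranteed positive term $\frac{\delta^2}{3^2\cdot 2^{14}\cdot 5}$. Your reduction itself is fine: $\cos(\beta^t)(1-\cos(\beta^t))\ge\delta(1-\delta)\ge\delta/2$, $M\le\|\mu^t\|$, and monotonicity in $\varepsilon,\gamma_x^t$.

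The paper's proof is the single line \emph{direct substitution into Theorem~\ref{th:main}}, and it does not deliver these constants either.

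\begin{itemize}
\item Plugging $\gamma_\mu^t=\|\mu^t\|^2/640$ into the theorem's increment $\gamma_\mu^t\big(\frac{\cos(\beta^t)(1-\cos(\beta^t))}{24\|\mu^t\|}\big)^2$ gives $\frac{c_t}{3^2\cdot 2^{13}\cdot 5}$, a factor $\frac{24}{5}$ below the claim.
\item The theorem absorbs $2a^t$ into half of the positive term at this $\gamma_\mu^t$. That step goes through only for $\gamma_x^t\le\frac{c_t}{3^2\cdot 2^{15}\cdot 5\,L}$, again a factor $\frac{24}{5}$ smaller than stated.
\end{itemize}

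So the mismatch you sensed is real and lies in the stated constants. What the substitution route actually yields is the increment $\frac{c_t}{3^2\cdot 2^{13}\cdot 5}$ under that smaller $\gamma_x^t$, and $\frac{\delta^2}{3^2\cdot 2^{15}\cdot 5}$ under $\gamma_x^t\le\frac{\delta^2}{3^2\cdot 2^{17}\cdot 5\,L}$.

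To get the constants as written, you need a sharper lower bound on $\|\nabla F_a(\mu^t)\|$ than the lossy $\frac{1}{8}\to\frac{1}{24}$ chain. One route:

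\begin{itemize}
\item Since $\nabla F_a(\mu)=\E_v[\nabla\psi_a(\mu+\varepsilon v)]$, apply \eqref{eq:psi_hess_norm_bound} on $\{\|\varepsilon v\|\le\|\mu\|/2\}$. This gives $\|\nabla F_a(\mu)-\nabla\psi_a(\mu)\|\le\frac{80\varepsilon\sqrt d}{\|\mu\|^2}$ up to an exponentially small tail.
\item Under the stated $\varepsilon$, that error is at most $\frac{1}{12d}$ of the lower bound $\frac{2\cos(\beta^t)(1-\cos(\beta^t))}{\|\mu^t\|}$ on $\|\nabla\psi_a(\mu^t)\|$.
\item The $\mu$-step then contributes roughly $\frac{c_t}{320}$, which dominates both $2a^t$ and the claimed increment.
\end{itemize}
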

\begin{proof}
    The proof follows by direct substitution into the result of Theorem \ref{th:main}.
\end{proof}

\begin{lemmarepeat}{lem:multiitteration_eval}
    Let in the setting of Theorem \ref{th:main} $\delta \leq \cos(\beta^0)\leq 1-\delta$  with $\delta \in \l(0;\frac{1}{2}\r)$ and $\varepsilon = \frac{d^{-\frac{3}{2}}  \delta M}{960},\gamma_x^t = \frac{\delta^2}{3\cdot2^{14} \cdot 5^2 L}$. Then $\E\l[ C^t|\mathcal{F}^{t-1}\r]$ increase monotonically to the value $\cos\l(\frac{\delta}{32d} + L\gamma_x^t + \arccos(1-\delta)\r)^2 (1-e^{-1})$ and then never decrease below.
\end{lemmarepeat} 
\begin{proof}
    To justify the first part of Statement related to $\E\l[ C^t|\mathcal{F}^{t-1}\r]$ monotonically growth it sufficient to show, that the angle $\beta^t$ will decrease until $\cos(\beta^t)$ reaches $1-\delta$. We note that $C^t$ dynamic analysis closely related to the evaluation of the angle $\beta^t$. We proceed by assuming $\delta \leq \cos(\beta^t)\leq q-\delta$ considering 
    \begin{eqnarray}\label{eq:good_lem_begining}
        \|\overline{\mu^{t+1}}-\overline{\nabla f(x^{t+1})}\|
        &\overset{\eqref{eq:grad_dif_bound}}{\leq}&
        \|\overline{\mu^{t+1}}-\overline{\nabla f(x^{t})}\| + a^t \notag \\
        &=&
        \|\overline{\mu^{t}+\gamma^t_\mu \nabla_{\mu} \E_{t}\l[ C^{t}\r]\Big|_{\mu = \mu^{t}}}-\overline{\nabla f(x^{t})}\| + a^t \notag \\
        &=&
        \|\overline{\mu^{t}+\gamma^t_\mu F_{\overline{\nabla f(x^{t})}}(\mu^t)}-\overline{\nabla f(x^{t})}\| + a^t.
    \end{eqnarray}
Then we recall decomposition \eqref{eq:2_int_decomposition} and that one integral there contains \eqref{tailor:second_term_begining}. 
We remind that $ \nabla \psi(\mu) =  \frac{2\langle \nabla f(x^t),\mu\rangle}{\mu^4} \{\nabla f(x^t)\|\mu\|^2 - \mu \langle \nabla f(x^t),\mu\rangle \}$ which implies its collinearity to $\overline{\mu^{t}}-\overline{\nabla f(x^{t})}$. Using the same bounds as in \eqref{eq:final_grad_lower_bound} we continue derivation.
\begin{eqnarray}
    \|\overline{\mu^{t+1}}-\overline{\nabla f(x^{t+1})}\|
        &\leq& 
        \|\overline{\mu^{t}}-\overline{\nabla f(x^{t})}\|
        -\frac{1}{2 \|\mu^t\|} 
        \gamma^t_\mu 
     \l( \frac{\|\nabla\psi(\mu^t)\|}{8} - \frac{40\varepsilon}{\|\mu^t\|^2} d^{\frac{3}{2}}
    -\frac{\sqrt{d}}{\varepsilon}e^{-\frac{\l( \frac{\|\mu^t\|}{2 \varepsilon} - \sqrt{d}\r)^2}{4}} \r) 
    +a^t.
\end{eqnarray}
After that we note $\delta \leq \cos(\beta^t)\leq 1-\delta$ with $\delta \in \l(0;\frac{1}{2}\r)$ implies both $\cos(\beta^t)$ and $1-\cos(\beta^t)$ greater than or equal to $\delta$ and one of them greater then $\frac{1}{2}$, which yields $\cos(\beta^t)(1-\cos(\beta^0)) \geq \frac{\delta}{2}$. Using it along with a choice of parameters in Theorem \ref{th:main} (see transitions \eqref{eq:C_F_full_dynamics} to \eqref{eq:main_th_final}) we conclude 
\begin{eqnarray}
    \|\overline{\mu^{t+1}}-\overline{\nabla f(x^{t+1})}\|
        &\leq&     \|\overline{\mu^{t}}-\overline{\nabla f(x^{t})}\|,
\end{eqnarray}
Which means that once the condition $\delta \leq \cos(\beta^t) \leq 1 - \delta$ holds $\beta^t \geq \beta^{t+1}$. Hence, $\E\l[ C^t|\mathcal{F}^{t-1}\r]$ growth monotonically until $\cos(\beta^t)$ exceeds $(1-\delta)$. After that $\beta^t$ can not exceed $\delta_0 :=L\gamma_x^t+\arccos{(1-\delta)}$, since the direction of the gradient changes less then $2\arcsin\l(\frac{L\gamma_x^t}{2}\r)$ per step, as was shown in the beginning of the Theorem \ref{th:main} proof, and $\beta^t$ can not increase further if $\delta \leq \cos(\beta^t) \leq 1 - \delta$ as shown above.

Finally, we evaluate 
\begin{eqnarray}\label{eq:3_1}
    \E\l[ C^t|\mathcal{F}^{t-1}\r]
    &=&
    \frac{1}{(2\pi\varepsilon^2)^{d/2}} 
    \int_{\mathbb{R}^d}
    \l\langle \overline{\nabla f(x^{t})}, \overline{u}\r\rangle^2 e^{- \frac{\|u - \mu^{t}\|^2}{2\varepsilon^2}} \mathrm{d}u \notag\\
    &=&
    \frac{1}{(2\pi\varepsilon^2)^{d/2}} 
    \int_{\mathbb{R}^d}
    \l\langle \overline{\nabla f(x^{t})}, \overline{s - \mu^{t}}\r\rangle^2 e^{- \frac{\|s \|^2}{2\varepsilon^2}} \mathrm{d}s \notag\\
    &=&
    \frac{1}{(2\pi)^{d/2}} 
    \int_{\mathbb{R}^d}
    \l\langle \overline{\nabla f(x^{t})}, \overline{\varepsilon v - \mu^{t}}\r\rangle^2 e^{- \frac{\|v \|^2}{2}} \mathrm{d}v.
\end{eqnarray}
We apply Laurent-Massart inequality \citep{laurent2000adaptive} \eqref{eq:Laurent-Massart} with $x=1$ to justify $\Pr\l( d - \|v\|^2 \geq 2\sqrt{d}\r) \leq e^{-1}$. Then we can bound the integral in \eqref{eq:3_1}.
\begin{eqnarray}\label{eq:3_2}
    \frac{1}{(2\pi)^{d/2}} 
    \int_{\mathbb{R}^d}
    \l\langle \overline{\nabla f(x^{t})}, \overline{\varepsilon v - \mu^{t}}\r\rangle^2 e^{- \frac{\|v \|^2}{2}} \mathrm{d}u 
    &\geq& 
    \frac{1}{(2\pi)^{d/2}} 
    \int_{\| v\|^2 \leq 2\sqrt d}
    \l\langle \overline{\nabla f(x^{t})}, \overline{\varepsilon v - \mu^{t}}\r\rangle^2 e^{- \frac{\|v \|^2}{2}} \mathrm{d}u \notag \\
    &\geq& 
    \frac{1}{(2\pi)^{d/2}} 
    \int_{\| v\|^2 \leq 2\sqrt d}
    \cos\l(\frac{\delta}{32d} + L\gamma_x^t + \arccos(1-\delta)\r)^2 e^{- \frac{\|v \|^2}{2}} \mathrm{d}u \notag \\
    &\geq& 
    \cos\l(\frac{\delta}{32d} + L\gamma_x^t + \arccos(1-\delta)\r)^2 (1-e^{-1}).
\end{eqnarray}
It concludes the proof.
\end{proof}

\subsection{Initialization with no prior information} \label{sec:no_prior_init}

Lemma \ref{lem:multiitteration_eval} naturally leads to two complementary settings. The first is formalized in Lemma \ref{lem:arbitrary_convergence} and corresponds to initializing the policy without any prior knowledge of the target function $f$.

\begin{lemma}\label{lem:arbitrary_convergence}
    Let Assumptions \ref{ass:lip}, \ref{ass:minimizer} be satisfied and $d \geq 16$. Then for iterations of Algorithm \ref{alg:theor1} with $\delta \leq \cos(\beta^0)\leq 1-\delta$, $\delta \in \l(0;\frac{1}{2}\r)$ and $\varepsilon = \frac{d^{-\frac{3}{2}}  \delta M}{960},\gamma_x^t = \frac{\delta^2}{3\cdot2^{14} \cdot 5^2 L},\gamma_\mu^t = \frac{\|\mu^t\|^2}{640}$ the following bound holds:

\begin{eqnarray}
    \sum_{t=0}^{T-1} \frac{1}{T} 
    \E\l[ 
    B^t\|\nabla f(x^{t})\|^2 \r] \leq 2\frac{f(x^0)-f(x^{*})}{T\gamma_x^t},
\end{eqnarray}
where $M = \min_{t}\{\|\mu^t\|\}$ and $B^t = \min\l\{ \E[C^0] + t \cdot \frac{\delta^2}{3\cdot2^{12} \cdot 5^2};
    \cos\l(\frac{\delta}{32d} + L\gamma_x^t + \arccos(1-\delta)\r)^2 (1-e^{-1})
    \r\}$.
\end{lemma}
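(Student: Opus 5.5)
The plan is to combine the descent inequality of Lemma~\ref{lem:descent} with a deterministic lower bound on the expected gradient alignment $\E\l[C^t|\mathcal{F}^{t-1}\r]$ drawn from the multi-iteration analysis. In other words, we show $\E\l[C^t|\mathcal{F}^{t-1}\r]\geq B^t$ for every $t$, substitute this into Lemma~\ref{lem:descent}, and handle the step-size factor as in Corollary~\ref{lem:ZO_SGD}.

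\textbf{Step 1 (step-size factor).} With $\gamma_x^t = \frac{\delta^2}{3\cdot 2^{14}\cdot 5^2 L}$ we certainly have $\gamma_x^t\leq \frac{1}{2L}$, so exactly as in \eqref{eq:zosgd_1} we get $\gamma_x^t - \frac{L{\gamma_x^t}^2}{2}\geq \frac{\gamma_x^t}{2}$. Since $\gamma_x^t$ is a constant, Lemma~\ref{lem:descent} yields
\[
\sum_{t=0}^{T-1}\frac{1}{T}\E\l[\E\l[C^t|\mathcal{F}^{t-1}\r]\|\nabla f(x^t)\|^2\r]\leq \frac{2(f(x^0)-f(x^*))}{T\gamma_x^t}.
\]

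\textbf{Step 2 (the bound $\E\l[C^t|\mathcal{F}^{t-1}\r]\geq B^t$).} We argue by induction on $t$ in two phases.
\begin{itemize}
\item \emph{Phase 1.} As long as $\delta\leq\cos(\beta^s)\leq 1-\delta$ for all $s<t$, the corollary following Theorem~\ref{th:main} applies; its $\delta$-version \eqref{eq:main_th_with_delta} uses exactly the parameter choices $\varepsilon=\frac{d^{-3/2}\delta M}{960}$, $\gamma_\mu^t=\frac{\|\mu^t\|^2}{640}$ and the stated $\gamma_x^t$. Each step then adds at least $\frac{\delta^2}{3\cdot 2^{12}\cdot 5^2}$. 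Telescoping gives $\E\l[C^t|\mathcal{F}^{t-1}\r]\geq \E[C^0]+t\frac{\delta^2}{3\cdot2^{12}\cdot5^2}$, which is the first argument of the minimum in $B^t$.
\item \emph{Phase 2.} Once $\cos(\beta^t)$ exceeds $1-\delta$, the proof of Lemma~\ref{lem:multiitteration_eval} shows that $\beta^t$ never again exceeds $L\gamma_x^t+\arccos(1-\delta)$. Through \eqref{eq:3_2} this gives $\E\l[C^t|\mathcal{F}^{t-1}\r]\geq\cos\l(\frac{\delta}{32d}+L\gamma_x^t+\arccos(1-\delta)\r)^2(1-e^{-1})$, which is the second argument.
\end{itemize}
In either phase $\E\l[C^t|\mathcal{F}^{t-1}\r]$ dominates the minimum $B^t$. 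Two points need justification here. First, the phase-1 hypothesis $\cos(\beta^t)\geq\delta$ is preserved, because $\beta^t$ is nonincreasing while in the band. Second, using $M=\min_t\|\mu^t\|$ in $\varepsilon$ makes the condition $\varepsilon\leq d^{-3/2}\cos\beta^t(1-\cos\beta^t)\|\mu^t\|/480$ hold uniformly, since $\cos\beta^t(1-\cos\beta^t)\geq\delta/2$.

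\textbf{Step 3 (combine).} Since $B^t$ is deterministic and nonnegative, and $\E\l[C^t|\mathcal{F}^{t-1}\r]\geq B^t$ pointwise, we have $\E\l[B^t\|\nabla f(x^t)\|^2\r]\leq \E\l[\E\l[C^t|\mathcal{F}^{t-1}\r]\|\nabla f(x^t)\|^2\r]$. Plugging this into Step 1 gives the claim.

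\textbf{Main obstacle.} The delicate part is the phase transition in Step 2. Once the angle leaves the band through $\cos\beta^t>1-\delta$, Theorem~\ref{th:main} no longer guarantees growth, so monotonicity can fail. I would rely on the Lemma~\ref{lem:multiitteration_eval} argument that the angle stays within $L\gamma_x^t$ of the band, and use the saturation value (rather than the linear growth) as the bound from that point on. The minimum in $B^t$ is precisely what absorbs this switch.

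One more check is needed: that the saturation value is at most the phase-1 value at the switching time. Otherwise the minimum could be the wrong branch. Since $B^t$ is a minimum, however, it suffices to show that $\E\l[C^t|\mathcal{F}^{t-1}\r]$ exceeds at least one branch at every $t$, which the two phases already provide.
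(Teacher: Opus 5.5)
Your proposal is correct and follows the same route as the paper. Like the paper, you bound $\gamma_x^t-\frac{L(\gamma_x^t)^2}{2}\ge\frac{\gamma_x^t}{2}$, obtain $\E[C^t\mid\mathcal{F}^{t-1}]\ge B^t$ from Theorem~\ref{th:main} (through its $\delta$-corollary) and Lemma~\ref{lem:multiitteration_eval}, and substitute into Lemma~\ref{lem:descent}; your two-phase argument (telescoped growth up to the first exit from the band $\delta\le\cos\beta^t\le 1-\delta$, the saturation bound afterwards, with the minimum in $B^t$ absorbing the switch) simply makes explicit what the paper compresses into a single citation.
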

\begin{proof}
    We start from the result of Lemma \ref{lem:descent}. The choice of $\gamma_x^t = \frac{\delta^2}{3\cdot2^{14} \cdot 5^2 L}$ guarantees 
    \begin{eqnarray}\label{eq:1_step_lem}
    \gamma_x^t
    - 
    \frac{L {\gamma_x^t}^2}{2}
    \geq
    \frac{\gamma_x^t}{2}.
    \end{eqnarray}
    Subsequently, Theorem \ref{th:main} and Lemma \ref{lem:multiitteration_eval} justifies
    \begin{eqnarray}\label{eq:2_step_lem}
        \E\l[C^{t}|\mathcal{F}^{t-1}\r] \geq B^t.
    \end{eqnarray}
    Substitution \eqref{eq:1_step_lem}, \eqref{eq:2_step_lem} into Lemma \ref{lem:descent} and rearranging the terms finish the proof.
\end{proof}

Notably, in this setting, substituting $\E\l[ C^{t}\r | \mathcal{F}^{t-1}]$ behaviour described in Lemma \ref{lem:multiitteration_eval} yields a non-standard convergence criterion, namely convergence in terms of a \textit{weighted} average of squared gradient norms. The weights increase with the iteration index until reaching a saturation level. Such a result may be more transparent for practical scenarios such as memory-efficient LLM fine-tuning, where the algorithm outputs the final iterate rather than an averaged solution.

We also note that Lemma \ref{lem:arbitrary_convergence} relies on a uniform-in-iteration lower bound of the form $\cos(\beta^t) > \delta$, which results in a fixed step size $\gamma_x^t$ and a constant additive increase of $\E\l[ C^{t}\r | \mathcal{F}^{t-1}]$ per iteration. However, as the policy improves, $\cos(\beta^t)$ increases, which simultaneously accelerates the growth of $\E\l[ C^{t}\r | \mathcal{F}^{t-1}]$ and allows for larger admissible step sizes $\gamma_x^t$. Consequently, the uniform bound becomes loose after a certain number of iterations and does not fully capture these acceleration effects in the theoretical rate. Along similar lines, bounds of the type used in the proof of Lemma \ref{lem:ZO_SGD} yield average values on the order of $\frac{1}{d}$ both for $\delta^2$ (consequently for $\gamma_x^t$) and $\E[C^0]$.   
It introduces a factor $d^2$ in the rate of Lemma \ref{lem:arbitrary_convergence} during the initial phase and results in growth of $\E\l[ C^{t}\r | \mathcal{F}^{t-1}]$ on the order of $\frac{1}{d}$ per iteration.

\subsection{Initialization leveraging gradient direction at the initial point} \label{sec:no_prior_init_1}

\begin{lemmarepeat}{lem:parallel}
    Let Assumptions \ref{ass:lip}, \ref{ass:minimizer} be satisfied, $d \geq 16$,  and $\mu^0 ~\|~ \nabla f(x^0)$. Then for iterations of Algorithm \ref{alg:theor1} with $\delta \leq \cos(\beta^0)\leq 1-\delta$, $\delta \in \l(0;\frac{1}{2}\r)$ and $\varepsilon = \frac{d^{-\frac{3}{2}}  \delta M}{960}, M = \min_{t}\{\|\mu^t\|\},\gamma_x^t = \frac{\delta^2}{3\cdot2^{14} \cdot 5^2 L},\gamma_\mu^t = \frac{\|\mu^t\|^2}{640}$ the following bound holds:
    \begin{eqnarray}
        \E[\| \nabla f(\overline{x}^T)\|^2] &\leq& 10 \frac{f(x^0) - f(x^*)}{T\gamma_x}. \notag
    \end{eqnarray}
\end{lemmarepeat}
\begin{proof}
    We start from noticing that \eqref{eq:1_step_lem} remains true. Additionally, given $\mu^0 ~\|~ \nabla f(x^0)$ in the setting of Lemma \ref{lem:multiitteration_eval} Algorithm \ref{alg:theor1} is already in the regime where $\E\l[ C^{t}\r | \mathcal{F}^{t-1}]$ fluctuates in a neighbourhood of $1$ and attains values no smaller than $\cos\l(\frac{\delta}{32d} + L\gamma_x^t + \arccos(1-\delta)\r)^2 (1-e^{-1})$. We choose $\delta = \frac{1}{4}$ which yields
    \begin{eqnarray}\label{eq:parallel_lem_C}
        \E\l[ C^{t}\r | \mathcal{F}^{t-1}] \geq \frac{1}{5}, \text{ for all }t.
    \end{eqnarray}

    Substituting \eqref{eq:1_step_lem} with \eqref{eq:parallel_lem_C} into the Lemma \ref{lem:descent} and rearranging the terms we derive the desired result.
\end{proof}
\end{appendixpart}
\end{document}